\newtheorem{thm}{Theorem} 
\newtheorem{lem}{Lemma}
\newtheorem{defn}{Definition}
\newtheorem{prob}{Problem}
\newcommand{\fixed@sra}{$\vrule height 2\fontdimen22\textfont2 width 0pt\shortrightarrow$}
\newcommand{\shortarrow}[1]{%
  \mathrel{\text{\rotatebox[origin=c]{\numexpr#1*45}{\fixed@sra}}}
}
\newcommand{\Rinf}{\mathbb{R}_*}
\newcommand{\Bcal}{\mathcal{B}}
\newcommand{\eps}{\varepsilon}
\renewcommand{\b}{\mathbf}
\newcommand{\ORMV}{\textsc{ORMV}~$(\max, +)$\textsc{-Mul}}
\newcommand{\OMV}{\textsc{OMV}~$(\max, +)$\textsc{-Mul}}
\newcommand{\OMVgeneral}{\textsc{OMV~Mul}}
\newcommand{\runinsec}[1]{\noindent\textbf{#1.}\ }
\renewenvironment{proof}[1][\proofname]{\par
  \vspace{-.5\topsep}
  \pushQED{\qed}%
  \normalfont
  \topsep0pt \partopsep0pt 
  \trivlist
  \item[\hskip\labelsep
        \itshape
    #1\@addpunct{.}]\ignorespaces
}{%
  \popQED\endtrivlist\@endpefalse
  \addvspace{6pt plus 6pt} 
}
\newif\ifshowsupplementalmaterial
\begin{document}

\title{Decoding Hidden Markov Models Faster Than Viterbi\\
Via Online Matrix-Vector $(\max, +)$-Multiplication}

%

\author{Massimo Cairo\\
Department of Mathematics\\
University of Trento\\
Trento, Italy\\
massimo.cairo@unitn.it
\And 
Gabriele Farina\\
Dipartimento di Elettronica, Informazione e Bioingegneria\\
Politecnico di Milano\\
I-20133, Milan, Italy\\
gabriele2.farina@mail.polimi.it
\AND Romeo Rizzi\\
Department of Computer Science\\
University of Verona\\
Verona, Italy\\
romeo.rizzi@univr.it}

\maketitle
 
\setlength{\textfloatsep}{2mm}

\newcommand{\bigoh}{\ensuremath{\mathcal{O}}}
\newcommand{\todo}[1]{\textcolor{red}{#1}}

\newcommand*{\textcircle}{}
\DeclareRobustCommand*{\textcircle}[1]{\tikz[baseline=(char.base)]{
  \node[shape=circle,draw,inner sep=0.5pt] (char) {\small #1};}}

\newcommand*{\textcirclebf}{}
\DeclareRobustCommand*{\textcirclebf}[1]{\tikz[baseline=(char.base)]{
  \node[shape=circle,draw,inner sep=0.6pt,fill=black] (char) %
    {\small\textcolor{white}{\textbf{#1}}};}}

\begin{abstract}
\begin{quote}
In this paper, we present a novel algorithm for the maximum \emph{a posteriori} decoding (\textsc{mapd}) of time-homogeneous Hidden Markov Models (HMM), improving the worst-case running time of the classical Viterbi algorithm by a logarithmic factor.
In our approach, we interpret the Viterbi algorithm as a repeated computation of matrix-vector $(\max, +)$-multiplications. On time-homogeneous HMMs, this computation is \emph{online}: a matrix, known in advance, has to be multiplied with several vectors revealed one at a time.
Our main contribution is an algorithm solving this version of matrix-vector $(\max,+)$-multiplication in subquadratic time, by performing a polynomial preprocessing of the matrix. Employing this fast multiplication algorithm, we solve the \textsc{mapd} problem in $\bigoh(mn^2/\log n)$ time for any time-homogeneous HMM of size $n$ and observation sequence of length $m$, with an extra polynomial preprocessing cost negligible for $m>n$.
To the best of our knowledge, this is the first algorithm for the \textsc{mapd} problem requiring subquadratic time per observation, under the {only} assumption -- usually verified in practice -- that the transition probability matrix does not change with time.

\end{quote} 
\end{abstract}

\section{Introduction}
Hidden Markov Models (HMMs) are simple probabilistic models originally introduced~\cite{viterbi1967error} to decode convolutional codes. 
Due to their universal and fundamental nature, these models have successfully been applied in several fields, with many important applications, such as gene prediction~\cite{haussler1996generalized}, speech, gesture and optical character recognition~\cite{gales1998maximum,huang1990hidden,starner1998real,agazzi1993hidden}, and part-of-speech tagging~\cite{kupiec1992robust}.
Their applications to bioinformatics began in the early 1990 and soon exploded to the point that currently they hold a recognized place in that field~\cite{yoon2009hidden,makinen2015genome}.

%

A HMM describes a stochastic process generating a sequence of observations $y_1,y_2,\dots,y_n$. Internally, a sequence of hidden states $x_1,x_2,\dots,x_n$ is generated according to a \emph{Markov chain}. At each time instant $t=1,2,\dots,n$, a symbol $y_t$ is observed according to a probability distribution depending on $x_t$.
We consider only time-homogeneous HMMs, i.e.~models whose parameters do not depend on the time $t$. While this assumption covers the majority of applications, some notable exceptions involving time-inhomogeneous models are known \cite{lafferty2001conditional}.

\runinsec{Maximum \emph{a posteriori} decoding (\textsc{mapd})}
Since the states of the model are hidden, i.e.~only the generated symbols can be observed, a natural problem associated with HMMs is the \textsc{mapd} problem:
\textit{given a HMM $\mathcal{M}$ and an observed sequence of symbols $Y$ of length $m$, find any state path $X$ through $\mathcal{M}$ maximizing the joint probability of $X$ and $Y$. We call any such $X$ a \emph{most probable state path} explaining the observation $Y$.}
Traditionally, the \textsc{mapd} problem is solved by the Viterbi algorithm~\cite{viterbi1967error}, in $\bigoh(mn^2)$ time and $\bigoh(mn)$ memory for any model of size $n$ and observation sequence of length $m$.

Over the years, much effort has been put into lowering the cost of the Viterbi algorithm, both in terms of memory and of running time.
\cite{grice1997reduced} showed that a checkpointing technique can be employed to reduce the memory complexity to $\bigoh(\sqrt{m}\cdot n)$; refinements of this idea (embedded checkpointing) deliver a family of time-memory tradeoffs, culminating into an $\bigoh(n\log m)$ memory solution with a slightly increased running time $\bigoh(mn^2\log m)$.

At the same time, several works reducing the time complexity of the algorithm in the average-case were developed \cite{vsramek2007line,churbanov2008implementing,felzenszwalb2004fast,esposito2009carpediem,kaji2010efficient}. 
Many of these works make assumptions on the structure of $\mathcal{T}$, and may lose the optimality or degenerate to the worst case $\Theta(mn^2)$ operations when these assumptions are not fulfilled.
%
%
{In \cite{lifshits2009speeding}, the authors show a method to speed up the decoding of HMMs by a $\bigoh(\log m) $ factor, by precomputing all possible observation sequences of length $\log m$, in a fashion similar to the Four Russians method. This requires the number of such sequences to be ``small''. In the same work, the authors also show that it is possible to compress the observation sequence to achieve speedups proportional to the compression ratios. However, this latter method seems to require the observation sequence to be available in advance.}

To the best of our knowledge no algorithm achieving a worst-case running time better than $\bigoh(mn^2)$ is known under the only assumption of time-homogeneousness.


\runinsec{Approach}
We give an algorithm solving the \textsc{mapd} problem for time-homogeneous HMMs with time complexity asymptotically lower than $\bigoh(mn^2)$, \emph{in the worst case}.
We regard the \textsc{mapd} problem as an iterated computation of a matrix-vector multiplication. For time-homogeneous models, the matrix is known in advance and does not change with time. However, the sequence of vectors to be multiplied cannot be foreseen, as each vector depends on the result of the previous computation; this rules out the possibility to batch the vectors into a matrix and defer the computation. 
We call this version of the problem, in which a fixed matrix has to be multiplied with several vectors revealed one at a time, ``the online matrix-vector multiplication (\OMVgeneral) problem''.

Consider the problem of multiplying a $n \times n$ matrix with a column vector of size $n$. Without further assumptions, the trivial $\bigoh(n^2)$ time algorithm is optimal, since all the $n^2$ elements of the matrix have to be read at least once. However, under the assumption that the matrix is known in advance and can be preprocessed, this trivial lower bound ceases to hold.
Algorithms faster than the trivial quadratic one are known for the \OMVgeneral\ problem over \emph{finite} semirings~\cite{williams2007matrix}, as well as over real numbers with standard $(+, \cdot)$-multiplication, if the matrix has only a \emph{constant} number of distinct values~\cite{liberty2009mailman}. 

However, none of the above algorithm can be applied to time-homogeneous HMMs, as their decoding relies on online real matrix-vector $(\max, +)$-multiplication (\ORMV).
In the specific case of real $(\max, +)$-multiplication, subcubic algorithms have been known for years \cite{dobosiewicz1990more,chan2008all,chan2015speeding} for the \emph{matrix-matrix} multiplication problem, with important applications to graph theory and boolean matrix multiplication, among others.
However, we are not aware of any algorithm solving the \ORMV\ problem in subquadratic time.
Note that the \ORMV\ can be used to compute the \OMVgeneral\ over the Boolean semiring: for this problem, it has been conjectured \cite{Henzinger2015} that no ``truly polynomially subquadratic'' algorithm\footnote{That is, running in time $\bigoh(n^{2-\eps})$ for some $\eps > 0$ after a polynomial preprocessing or the matrix.} exists for the \ORMV\ problem.

We reduce the \ORMV\ problem to a multi-dimensional geometric dominance problem, following an approach similar to that of \cite{bremner2006necklaces,chan2008all}.
Then, the geometric problem is solved by a divide-and-conquer algorithm, which can be regarded as a transposition of the algorithm of \cite{chan2008all} to the online setting.
%
Our technique yields a worst-case $\bigoh(mn^2/\log n)$ algorithm, called \textsc{gdfv}, solving the \textsc{mapd} problem after a polynomial preprocessing of the model.

\runinsec{Contributions} Our key contributions are as follows:
	(i) we extend the geometric dominance reporting problem introduced in \cite{chan2008all} to the online setting;
	(ii) we solve the \ORMV\ problem in $\bigoh(n^2/\log n)$ time after a polynomial preprocessing of the $n\times n$ matrix;
	(iii) we show an algorithm solving the \textsc{mapd} problem on time-homogeneous HMMs in $\bigoh(mn^2/\log n)$ time in the worst-case, after a polynomial preprocessing of the model.

Finally, we experimentally evaluate the performance of our algorithms, with encouraging results. Currently the problem sets in which we outperform Viterbi are limited, but we hope that the approach we propose will open the way to further improvements on this problem in future works.

\section{Preliminaries}\label{sec:preliminaries}
\subsection{Notation}
The $i$-th component of a vector $\b{v}$ is denoted by $\b{v}[i]$; similarly, $\b{M}[i,j]$ denotes the entry of row $i$ and column $j$, in matrix $\b{M}$. Indices will always be considered as starting from $1$.
Given two vectors $\b{a}$ and $\b{b}$ of dimension $n$, such that $\b{a}[i] \le \b{b}[i]$ for every coordinate index $i = 1, \ldots, n$, we write $\b{a} \preceq \b{b}$ and say that $\b{b}$ \emph{dominates} $\b{a}$, or, equivalently, that $(\b{a}, \b{b})$ is a \emph{dominating pair}.

Given a matrix or vector $\b{M}$ with non-negative entries, we write $\log \b{M}$ to mean the matrix or vector that is obtained from $\b{M}$ by applying the logarithm on every component. We will almost always work with the extended set $\Rinf = \mathbb{R}\cup\{-\infty\}$, so that we can write $\log 0 = -\infty$. We assume that $-\infty + x = x + (-\infty) = -\infty$ and $x \ge -\infty$ for all $x\in\Rinf$.

\subsection{Hidden Markov Models (HMMs)}
We formally introduce the concept of time-homogeneous Hidden Markov Models.
\begin{defn}
	\label{defn:HMM}
	A time-homogeneous HMM is a tuple $\mathcal{M} = (\mathcal{S}, \mathcal{A}, \Pi, \mathcal{T}, \mathcal{E})$, composed of:
	\begin{itemize}[nolistsep]
		\item a set $\mathcal{S}=\{s_1, \dots, s_n\}$ of $n$ hidden states; $n$ is called the \emph{size} of the model,
		\item an output alphabet $\mathcal{A} = \{a_1,\dots, a_{|\mathcal{A}|}\}$,
		\item a probability distribution vector $\Pi = \{\pi_1, \dots, \pi_n\}$ over the initial states,
		\item a matrix $\mathcal{T} = \{t_s(s')\}_{s,s'\,\in\,\mathcal{S}}$ of transition probabilities between states,
		\item a matrix $\mathcal{E} = \{e_s(a)\}_{s\,\in\,\mathcal{S}}^{a\,\in\,\mathcal{A}}$ of emission probabilities.
	\end{itemize}
	Matrices $\mathcal{T}$ and $\mathcal{E}$ are stochastic, i.e., the entries of every row sum up to 1.
\end{defn}
%
%
For notational convenience, we relabel the states of a HMM with natural numbers, i.e.~we let $\mathcal{S}=\{1,\dots,n\}$.

As stated in the introduction, HMMs define generative processes over the alphabet $\mathcal{A}$. The initial state $x_0 \in \mathcal{S}$ is chosen according to the distribution $\Pi$; then, at each step, a symbol $a$ is generated according to the probability distribution $e_{x}(a)$, where $x$ is the current state; a new state $x'$ is chosen according to the probability distribution induced by $t_{x}(x')$, and the process repeats.
The probability of a state path $X = (x_1,\dots,x_m)$ joint to an observation sequence $Y = (y_1, \dots, y_m)$ is computed as:
\[
	\Pr(X,Y) = \pi_{x_1}\left(\prod_{i = 1}^{m-1} t_{x_i}(x_{i+1})\right)\left(\prod_{i = 1}^m e_{x_i}(y_i)\right).
\]

\subsection{The Viterbi algorithm}\label{subsec:viterbi}
The Viterbi algorithm consists of two phases: in the first phase, a simple dynamic programming approach is used to determine the probability of the most probable state path ending in each possible state. In the second phase, the data stored in the dynamic programming table is used to reconstruct a most probable state path.
\begin{defn}
	Assume given a HMM  $\mathcal{M} = (\mathcal{S}, \mathcal{A}, \Pi, \mathcal{T}, \mathcal{E})$ and an observed sequence $A = (a_1, \dots, a_m)$. For every $s\in\mathcal{S}$ and $i=1,\dots, m$, denote by $q_i(s)$ the probability of any most probable path ending in state $s$ explaining the observation $A_{i-1} = (a_1, \dots, a_{i-1})$.
\end{defn}
By definition of $q_i(s)$, any most probable path explaining $A$ has probability $\max_{s\in\mathcal{S}}\left\lbrace e_{s,a_m}\cdot q_m(s) \right\rbrace$.
The $q_i(s)$ values can be computed inductively. Indeed, $q_1(s) = \pi_s$ for all $s\in\mathcal{S}$, while for every $i > 1$ and $s\in\mathcal{S}$ it holds:
\begin{equation}\label{eq:qis}
	q_i(s) = \max_{s'\,\in\,\mathcal{S}} \left\lbrace q_{i - 1}(s')\cdot t_{s'}(s)\cdot e_{s'}(a_{i-1})\right\rbrace.
\end{equation}
In order to compute all the $n$ values $q_i(s)$, for any fixed $i > 1$, $\Theta(n^2)$ comparisons have to be performed. This phase is in fact the bottleneck of the algorithm.

The second phase of the algorithm uses the $q_i(s)$ values to reconstruct an optimal path in $\bigoh(mn)$ time. We will not deal with this second and faster part, and only mention that most of the previously developed solutions for it, including the memory saving ones~\cite{vsramek2007line,churbanov2008implementing}, are still applicable once the first part has been carried out based on our approach.

\section{Online matrix-vector multiplication problem}



We formally introduce the online matrix-vector $(\max, +)$-multiplication problem briefly discussed in the introduction.

\begin{defn}[Matrix-vector $(\max, +)$-multiplication]\label{defn:max plus}~\\ Given a $m\times n$ matrix $\b{A}$ and a $n$-dimensional column vector $\b{b}$ over $\Rinf$, their $(\max, +)$-multiplication $\b{A} * \b{b}$ is a column vector of dimension $m$ whose $i$-th component is:
\[
	(\b{A} * \b{b})[i] = \max_{j\,=\,1}^n \left\lbrace\b{A}[i, j] + \b{b}[j]\right\rbrace.
\]
\end{defn}

\begin{prob}[\ORMV\ problem]\label{prob:max plus} Given a $m\times n$ matrix $\b{A}$ over $\Rinf$, perform a polynomial-time preprocess of it, so that the product $\b{A} * \b{b}$ can be computed for any input vector $\b{b}$ of size $n$.
\end{prob}

Lemma~\ref{lem:product to viterbi} is a simple result that bridges the gap between the \textsc{mapd} problem and the \ORMV\ problem, showing that any fast solution to Problem~\ref{prob:max plus} can be turned into a fast solution for the \textsc{mapd} problem.

\begin{lem}\label{lem:product to viterbi}
	Any algorithm for Problem~\ref{prob:max plus} computing $\b{A} * \b{b}$ in $T(m, n)$ time after $U(m, n)$ preprocessing time can be used to solve the \textsc{mapd} problem for any time-homogeneous HMM of size $n$ and observation sequence of length $m$ in $\bigoh(m\cdot T(n,n))$ time after a $U(n,n)$ time preprocessing.
\end{lem}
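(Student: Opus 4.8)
The plan is to recast the first (dynamic-programming) phase of the Viterbi algorithm as a sequence of matrix-vector $(\max,+)$-multiplications against a \emph{single fixed} matrix, and then invoke the hypothesized fast algorithm for Problem~\ref{prob:max plus}. First I would take logarithms in the Viterbi recurrence~\eqref{eq:qis} to turn products into sums, working throughout in $\Rinf$ so that the convention $\log 0 = -\infty$ transparently handles zero-probability transitions and emissions. Writing $\b{Q}_i$ for the vector with $\b{Q}_i[s] = \log q_i(s)$, the recurrence becomes
\[
  \b{Q}_i[s] = \max_{s'\,\in\,\mathcal{S}} \bigl\{ \b{Q}_{i-1}[s'] + \log t_{s'}(s) + \log e_{s'}(a_{i-1}) \bigr\}.
\]

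The crucial step is to separate the part of the right-hand side that varies with time from the part that does not. The transition term $\log t_{s'}(s)$ depends only on the model, whereas the emission term $\log e_{s'}(a_{i-1})$ depends on the observation $a_{i-1}$ but is indexed only by the source state $s'$, i.e.\ by the variable over which the maximum is taken. Hence I would fold the emission term into the vector rather than into the matrix: define the fixed $n\times n$ matrix $\b{A}$ by $\b{A}[s,s'] = \log t_{s'}(s)$ (the transpose of $\log\mathcal{T}$), and at each step form the vector $\b{v}_{i-1}$ with $\b{v}_{i-1}[s'] = \b{Q}_{i-1}[s'] + \log e_{s'}(a_{i-1})$. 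Then the recurrence reads exactly $\b{Q}_i = \b{A} * \b{v}_{i-1}$, an instance of Definition~\ref{defn:max plus} in which the matrix $\b{A}$ is identical across all $m-1$ steps.

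With this reduction in hand the accounting is routine. I would preprocess $\b{A}$ once, at cost $U(n,n)$. The base case $\b{Q}_1[s] = \log\pi_s$ and each vector $\b{v}_{i-1}$ are assembled in $\bigoh(n)$ time; each of the $m-1$ multiplications $\b{A}*\b{v}_{i-1}$ costs $T(n,n)$; and the final probability $\max_{s}\bigl\{ \b{Q}_m[s] + \log e_s(a_m) \bigr\}$ is an $\bigoh(n)$ reduction. Since any algorithm solving Problem~\ref{prob:max plus} must at least write its $n$-entry output, $T(n,n) = \Omega(n)$, so the per-step $\bigoh(n)$ overhead is absorbed and the forward phase costs $\bigoh(m\cdot T(n,n))$ after $U(n,n)$ preprocessing. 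The second (reconstruction) phase, which derives an optimal path from the stored $\b{Q}_i$ values, runs in $\bigoh(mn)$ time as noted in Section~\ref{subsec:viterbi}, and is likewise dominated by $\bigoh(m\cdot T(n,n))$.

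I expect the only real content to lie in the separation step: recognising that the time-varying emission factors are indexed solely by the state being maximised over, and can therefore be absorbed into the multiplied vector, leaving the transition matrix --- and only the transition matrix --- as the fixed operand required by the online setting. Everything else is logarithmic rewriting and a straightforward cost tally.
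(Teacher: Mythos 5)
Your proposal is correct and follows essentially the same route as the paper: take logarithms of the Viterbi recurrence, fix the transposed log-transition matrix $\log{^t\mathcal{T}}$ as the preprocessed operand, and absorb the time-varying emission terms (indexed by the source state $s'$) into the multiplied vector, so that each step is one \ORMV\ instance costing $T(n,n)$. Your additional remarks --- that $T(n,n)=\Omega(n)$ absorbs the per-step vector-assembly overhead, and that the $\bigoh(mn)$ backtracking phase is dominated --- are points the paper leaves implicit, but the argument is the same.
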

\begin{proof}
	Denote by $^t\mathcal{T}$ the transpose of the state transition matrix $\mathcal{T}$ of $\mathcal{M}$. Furthermore, for every $i = 1,\dots, m$, introduce the pair of vectors
	\[
		\b{q}_i = (q_i(1), \dots, q_i(n)),\quad \b{e}_i = (e_1(a_{i}), \dots, e_n(a_{i})).
	\]
	The value $q_i(s)$ corresponding to instant $i>0$ and state $s\in \mathcal{S}$ can be computed as follows, in logaritmic scale:
		\[
			\begin{aligned}
				\log q_i(s) &=\log \max_{s'\,\in\,\mathcal{S}}\left\lbrace q_{i-1}(s')\cdot t_{s'}(s)\cdot e_{s'}(a_{i-1}) \right\rbrace\\
				&=\max_{s'\,\in\,\mathcal{S}}\left\lbrace \log(t_{s'}(s)) + \log(q_{i-1}(s')\, e_{s'}(a_{i-1})) \right\rbrace\\
				&=((\log{^t\mathcal{T}}) * (\log \b{q}_{i-1} + \log \b{e}_{i-1}))[s].
			\end{aligned}			
		\]
	

	\noindent Notice that the $n \times n$ matrix $\log {^t\mathcal{T}}$ depends only on the model and is time-invariant; therefore, we can compute ${q}_i(s)$ for all $s=1,\dots,n$ in batch with an instance of \ORMV:
	\[
		\log \b{q}_i = (\log{^t\mathcal{T}}) * (\log \b{q}_{i-1} + \log \b{e}_{i-1}).
	\]
	
	Once the $q_i(s)$ values have been computed, we can use the second part of the Viterbi algorithm out of the box. The time required for the multiplication -- the bottleneck of the algorithm -- is $\bigoh(T(n,n))$ by hypothesis, hence the final algorithm has time complexity $\bigoh(m\cdot T(n,n))$. The time complexity of the preprocessing is $U(n,n)$.
\end{proof}

\section{From multiplication to geometric dominance}

Consider the following geometric problem, which apparently has no relation with the $(\max,+)$-multiplication, nor with the Viterbi algorithm.

\begin{prob}[Online geometric dominance reporting]\label{prob:dominance}~\\ Let $\Bcal$ be a set of $d$-dimensional vectors\footnote{The coordinates of the vectors can range over any chosen totally-ordered set, as long as any two coordinate values can be compared in constant time.}. Given a vector $\b{p} \in \Bcal$, we define its \emph{domination set} as
\[
	\delta_\Bcal(\b{p}) = \{\b{b} \in \Bcal: \b{b} \preceq \b{p}\}.
\]
Preprocess $\Bcal$ so that at a later time the set $\delta_\Bcal(\b{p})$ can be computed for any input vector $\b{p}$.
\end{prob}


\begin{lem}\label{lem:dominance to product}
	Any algorithm solving Problem~\ref{prob:dominance} in $\bigoh(T(d,|\Bcal|)+|\delta_\Bcal(\b{p})|)$ time after a preprocessing time $\bigoh(U(d,|\Bcal|))$ can be turned into an algorithm solving the \ORMV\ problem for any $m\times t$ matrix in $\bigoh(m+t\cdot T(t,m))$ time, with preprocessing time $\bigoh(mt^2 + t\cdot U(t,m))$.
\end{lem}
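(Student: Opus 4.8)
The plan is to reduce a single $(\max,+)$-multiplication to $t$ geometric dominance queries, one per column of $\b{A}$, taking the geometric dimension to be $d = t$ and the point set to have size $|\Bcal| = m$ (one point per row). The guiding observation is that computing $(\b{A}*\b{b})[i] = \max_{j}\{\b{A}[i,j]+\b{b}[j]\}$ amounts to identifying, for each row $i$, a column attaining the maximum; and the statement ``column $j$ is at least as good as column $j'$ in row $i$'', i.e.\ $\b{A}[i,j]+\b{b}[j] \ge \b{A}[i,j']+\b{b}[j']$, rearranges to $\b{A}[i,j']-\b{A}[i,j] \le \b{b}[j]-\b{b}[j']$. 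This cleanly separates the matrix data (left) from the query data (right) coordinate by coordinate, so the whole condition ``$j$ is an argmax for row $i$'' becomes a single componentwise inequality, i.e.\ a dominance test.

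Concretely, during preprocessing I would, for each column $j=1,\dots,t$, build the point set $\Bcal_j = \{\b{r}^{(j)}_i : i=1,\dots,m\}$ of $m$ vectors in dimension $t$ with $\b{r}^{(j)}_i[j'] = \b{A}[i,j']-\b{A}[i,j]$, and preprocess each $\Bcal_j$ for dominance reporting. Forming all differences costs $\bigoh(mt^2)$ and the $t$ preprocessings cost $t\cdot U(t,m)$, matching the claimed preprocessing bound. At query time, given $\b{b}$, for each $j$ I form $\b{p}^{(j)}[j'] = \b{b}[j]-\b{b}[j']$ in $\bigoh(t)$ time and call $\delta_{\Bcal_j}(\b{p}^{(j)})$; by the rearrangement above, $\b{r}^{(j)}_i \preceq \b{p}^{(j)}$ holds precisely when column $j$ attains the maximum in row $i$ (the coordinate $j'=j$ contributes the trivial $0\le 0$), so the reported rows are exactly those whose answer equals $\b{A}[i,j]+\b{b}[j]$, which I write into the output vector. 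Summing over $j$ yields $\bigoh(m + t\cdot T(t,m))$ \emph{provided} the total number of reported pairs is $\bigoh(m)$.

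The main obstacle is exactly that proviso: with ties, a row may have its maximum attained at several columns and would then be reported once per winning column, so $\sum_j |\delta_{\Bcal_j}(\b{p}^{(j)})|$ could be as large as $mt$, breaking the output-sensitive bound. I would repair this by breaking ties consistently, charging each row to a unique winning column, say the smallest index attaining its maximum. This changes the required comparison: for coordinates $j' < j$ the inequality must become strict ($<$), while for $j' > j$ it stays non-strict ($\le$). To realize this mixed strict/non-strict test inside the plain $\preceq$ of Problem~\ref{prob:dominance}, I would lift each coordinate to a pair ordered lexicographically---legitimate by the footnote of Problem~\ref{prob:dominance} allowing any totally ordered coordinate set---appending a fixed secondary tag ($1$ on the $\b{r}$-side versus $0$ on the $\b{p}$-side for the strict coordinates $j'<j$, and equal tags elsewhere), so that lexicographic dominance of the pairs reproduces exactly the intended strict or non-strict comparison. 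With this tie-breaking every row is reported in exactly one of the $t$ queries, so $\sum_j |\delta_{\Bcal_j}(\b{p}^{(j)})| = m$ and the query bound follows; the extra $\bigoh(t^2)$ spent forming the $t$ query vectors is absorbed into $t\cdot T(t,m)$, since any dominance query must at least read its $t$-dimensional argument, giving $T(t,m)=\Omega(t)$. A minor remaining technicality is handling $-\infty$ entries of $\b{A}$, which I would dispatch via the extended-arithmetic conventions already fixed for $\Rinf$ (in particular, a row all of whose columns evaluate to $-\infty$ simply receives answer $-\infty$).
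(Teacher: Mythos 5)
Your construction is, at its core, the same as the paper's: the same difference vectors $\b{A}[i,j']-\b{A}[i,j]$ grouped into one $t$-dimensional point set per column, one dominance query per column at multiplication time, and the same counting argument that the query bound $\bigoh(m+t\cdot T(t,m))$ holds only if each row is charged to a \emph{unique} winning column. Your tie-breaking device --- smallest maximizing index, encoded by lexicographically ordered coordinate pairs with unequal tags on the strict coordinates --- is sound, and is a legitimate variant of the paper's mechanism, which instead appends the column index as an extra lexicographic coordinate to the entries of $\b{b}$ so that no two column sums can ever tie.

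The genuine gap is the treatment of $-\infty$, which you dismiss as a technicality ``dispatched via the extended-arithmetic conventions already fixed for $\Rinf$''. Those conventions define only addition; they define neither $x-(-\infty)$ (which leaves $\Rinf$) nor $-\infty-(-\infty)$, and your reduction is built entirely out of such differences. Worse, no choice of convention can rescue the equivalence at the heart of your argument: if $\b{A}[i,j]=\b{A}[i,j']=-\infty$ with $\b{b}[j],\b{b}[j']$ finite, the two sums tie at $-\infty$, yet the difference test reads $\bigl(-\infty-(-\infty)\bigr)\le\b{b}[j]-\b{b}[j']$, whose truth depends on $\b{b}$ whatever value you assign to the left-hand side. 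Note also that your closing remark addresses only $-\infty$ entries of $\b{A}$, while Problem~\ref{prob:max plus} allows --- and the HMM application produces, via zero transition and emission probabilities --- $-\infty$ entries of $\b{b}$; these are what make the failure concrete. Take $t=3$, $\b{A}[i,\cdot]=(-\infty,0,-\infty)$, $\b{b}=(0,-\infty,1)$, with the natural conventions $-\infty-(-\infty)=0$, $x-(-\infty)=+\infty$, $-\infty-x=-\infty$: all three columns tie at $-\infty$, yet none of your three dominance tests passes (column $1$ fails its test against column $3$, column $2$ fails the strict test against column $1$, and column $3$ fails the strict test against column $2$), so row $i$ is never written. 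Your default ``all-$(-\infty)$ rows receive $-\infty$'' happens to patch this instance, but correctness then rests on two claims you never establish: (a) a row goes unreported \emph{only} when all its column sums are $-\infty$, and (b) no row is ever reported twice, or with a non-maximal value. Neither follows from conventions; both require a case analysis of the ``column $j$ beats column $j'$'' relation, which --- as the example shows --- is no longer induced by a total preorder once infinities enter on both sides. The paper avoids all of this in one stroke: every entry is lifted to a triple $\langle \text{finiteness flag}, \text{value}, \text{column index}\rangle$ in the lexicographically ordered group $\mathbb{Z}\times\mathbb{R}\times\mathbb{Z}$, where subtraction is total, so the rearrangement of the inequality is valid unconditionally and maxima are automatically unique --- a single transformation subsuming both your tag trick (ties) and the missing piece (infinities).
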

\begin{proof}
	This constructive proof comes in different stages.
	First, the result is obtained under the simplifying assumptions that (i) neither $\b{A}$ nor $\b{b}$ have any $-\infty$ entry, and (ii) the maximum sum on any row, $\b{A}[\cdot, j] + \b{b}[j]$, is achieved by exactly one value of the column index $j$.
	Later, these conditions will be dropped.
	
	
	Observe that $(\b{A}*\b{b})[i] = \b{A}[i,j^*] + \b{b}[j^*]$ iff
	\begin{equation}\label{eq:inequality}
		\b{A}[i, j^*] + \b{b}[j^*] \geq \b{A}[i, j] + \b{b}[j]
	\end{equation}
	for every column index $j$.
	Under assumption (i), this inequality can be rewritten as
	\[
		\b{A}[i, j] - \b{A}[i, j^*] \leq \b{b}[j^*] - \b{b}[j].
	\]
	Defining the values
	\[
		a_{i,j^*}(j)=\b{A}[i, j]-\b{A}[i, j^*],\quad b_{j^*}(j)=\b{b}[j^*] - \b{b}[j]
	\]
	for all the feasible values of $i, j, j^*$, we obtain
	\begin{equation}\label{eq:forall condition}
		(\b{A}*\b{b})[i] = \b{A}[i,j^*] + \b{b}[j^*] \iff a_{i,j^*}(j) \leq b_{j^*}(j)\quad \forall j.
	\end{equation}
	Notice that the last expression is actually a statement of geometric dominance, between the two $t$-dimensional vectors $\b{\tilde{A}}_{i, j^*} = (a_{i, j^*}(1), \dots, a_{i, j^*}(t))$ and $\b{\tilde{b}}_{j^*} = (b_{j^*}(1), \dots, b_{j^*}(t))$.
	This immediately leads to the following algorithm.
	
	\begin{algorithm}
		\small
		\caption{\small $m\times t$ matrix-vector $(\max,+)$-multiplication}
		\label{algo:tablet algo simplified}
		\begin{algorithmic}[1]
			\Procedure{preprocess}{$\b{A}$}
				\For{$j^* = 1,\dots,t$}
					\For{$i=1,\dots,m$}
						\State $\b{\tilde{A}}_{i,j^*} \gets (a_{i,j^*}(1), \dots, a_{i,j^*}(t))$
					\EndFor{}
					\State $\Bcal_{j^*} \gets \{\b{\tilde{A}}_{1,j^*}, \dots, \b{\tilde{A}}_{m,j^*}\}$
					\State Preprocess $\Bcal_{j^*}$ \label{line:preprocess}
				\EndFor{}	
			\EndProcedure{}
		\end{algorithmic}
		\vspace{1mm}\color{black!15!white}\hrule\color{black}\vspace{1.7mm}
		\begin{algorithmic}[1]
			\Procedure{multiply}{$\b{b}$}\Comment{Returns $\b{A} * \b{b}$}
				\For{$j^* = 1,\dots,t$}
					\State $\b{\tilde{b}}_j \gets (b_{j^*}(1), \dots, b_{j^*}(t))$
					\State $\delta \gets \delta_{\Bcal_{j^*}}(\b{\tilde{b}}_{j^*})$, as defined in Problem~\ref{prob:dominance} \label{line:dominance}
					\For{\textbf{all} $\b{\tilde{A}}_{i,j^*} \in \delta$}
						\State $\b{m}[i] \gets \b{A}[i,j^*]+\b{b}[j^*]$ \label{line:amortized}
					\EndFor{}
				\EndFor{}
				\State \textbf{return} $\b{m}$
			\EndProcedure{}
		\end{algorithmic}
	\end{algorithm}
	Line~\ref{line:preprocess} in procedure~\textsc{preprocess} requires $U(t,m)$ time, making the total preprocessing cost of \textsc{multiply} $\bigoh(mt^2+t\cdot U(t,m))$. As for the running time, Line~\ref{line:dominance} of \textsc{multiply} takes $\bigoh(T(t,m)+|\delta|)$ time, and is executed $t$ times.
	Under assumption (ii), there is only one column $j^*$ that satisfies Equation~\ref{eq:forall condition} for each row $i$, hence the total number of elements appearing in $\delta$ is exactly $m$. As a consequence, Line~\ref{line:amortized} of \textsc{multiply} is executed $m$ times and the total time complexity of \textsc{multiply} is $\bigoh(m+t\cdot T(m,t))$.
	

	We now relax assumptions~(i) and (ii) by applying a transformation of the input. Instead of working on $\Rinf$, we work on triples over $\mathbb{N} \times \mathbb{R} \times \mathbb{N}$.
	The matrix $\b{A}$ is transformed as follows: each element $x>-\infty$ is replaced by $\langle 0, x, 0 \rangle$, while each occurence of $-\infty$ is replaced by $\langle -1, 0, 0 \rangle$.
	The input vector $\b{b}$ is transformed similary, but the third coordinate is used to hold the index of the replaced element. Namely, element $\b{b}[j] = x$ is replaced by $\langle 0, x, j \rangle$ if $x > -\infty$ and by $\langle -1, 0, j \rangle$ otherwise. One can informally regard the first two entries of each triple $\langle a, b, \cdot \rangle$ as a shorthand for the value $a\cdot\infty + b$. Any two triples are compared according to lexicographical order, while addition and subtraction are performed element-wise.
	
	The crucial observation is that, if $\b{A}[i, j^*] + \b{b}[j^*] > \b{A}[i, j] + \b{b}[j]$ before the transformation, then the same holds also after the transformation. Hence, we can solve the transformed problem to obtain the solution of the original problem.
	Our algorithm can be applied as it is to the transformed problem: indeed, the inequality in Equation~\ref{eq:inequality} can be rearranged without any further assumption; moreover, there can be no two distinct columns $j$ and $j'$ achieving the maximum, as the two triples $\b{A}[i, j] + \b{b}[j]$ and $\b{A}[i, j'] + \b{b}[j']$ differ at least on the third element.
	Once the output vector is obtained, replace each triple $\langle k, x, j \rangle$ with $x$ if $k = 0$ and with $-\infty$ otherwise. Conveniently, the third element $j$ holds the index of the column achieving the maximum.
\end{proof}


Lemma~\ref{lem:splice} shows that every fast algorithm for the \OMV\ of narrow rectangular matrices can be turned into a \OMV\ algorithm for square matrices.

\begin{lem}\label{lem:splice}
	Any algorithm computing the \OMV\ of a $m\times t$ matrix in $T(m,t)$ time and $U(m,t)$ preprocessing time, can be used to multiply any $m\times n$ matrix, $n\geq t$, in $\bigoh(n/t\cdot (T(m,t)+m))$ time and $\bigoh(n/t\cdot U(m,t))$ preprocessing time.
\end{lem}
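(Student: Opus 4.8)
The plan is to exploit the fact that a $(\max,+)$-product over all $n$ columns decomposes into a component-wise maximum of products over disjoint blocks of columns. First I would partition the column index set $\{1,\dots,n\}$ into $\lceil n/t\rceil$ consecutive blocks, each of width exactly $t$, padding the final block with columns of $-\infty$ entries if $t$ does not divide $n$ (over $\Rinf$ this is legitimate). Writing $\b{A}_k$ for the $m\times t$ submatrix formed by the $k$-th block of columns and $\b{b}_k$ for the corresponding length-$t$ slice of $\b{b}$, the defining maximum of Definition~\ref{defn:max plus} splits as
\[
  (\b{A}*\b{b})[i]=\max_{j=1}^{n}\{\b{A}[i,j]+\b{b}[j]\}=\max_{k}\;(\b{A}_k*\b{b}_k)[i],
\]
since every column $j$ lies in exactly one block, and a padded $-\infty$ entry can never attain the maximum because $-\infty+x=-\infty$. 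This identity is the entire engine of the reduction.

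For the preprocessing phase I would run the assumed $m\times t$ preprocessing routine independently on each of the $\lceil n/t\rceil$ submatrices $\b{A}_k$, storing the resulting data structures. This costs $\lceil n/t\rceil\cdot U(m,t)=\bigoh(n/t\cdot U(m,t))$ time, matching the claimed bound.

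To answer a query $\b{b}$, I would, for each block $k$, invoke the subroutine to obtain the length-$m$ vector $\b{A}_k*\b{b}_k$ in $T(m,t)$ time, while maintaining a running component-wise maximum $\b{m}$ over the blocks, updating all $m$ coordinates in $\bigoh(m)$ time per block. After all blocks are processed, $\b{m}$ equals $\b{A}*\b{b}$ by the displayed identity. The total cost is $\lceil n/t\rceil\cdot(T(m,t)+\bigoh(m))=\bigoh(n/t\cdot(T(m,t)+m))$, as required.

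I expect no deep obstacle; the reduction is a clean block decomposition. The only points demanding care are the bookkeeping when $t\nmid n$ -- ensuring the padded block is genuinely $m\times t$ so the subroutine's interface is respected, and verifying that the $-\infty$ padding is inert under the $(\max,+)$ semiring -- and, if the reconstruction phase of the Viterbi algorithm is to be driven downstream, propagating the per-block argmax returned by the subroutine (e.g.\ the third triple-coordinate of Lemma~\ref{lem:dominance to product}) together with the index of the block attaining the component-wise maximum. This additional tracking leaves the asymptotics unchanged.
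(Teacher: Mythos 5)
Your proposal is correct and follows essentially the same route as the paper's own proof: the identical block decomposition of the columns into $m\times t$ slices, the $-\infty$ padding when $t \nmid n$, the component-wise maximum merge, and the same accounting giving $\bigoh(n/t\cdot(T(m,t)+m))$ query time and $\bigoh(n/t\cdot U(m,t))$ preprocessing. The extra remarks on argmax propagation are a harmless addition not needed for the statement itself.
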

\begin{proof}
	Assume without loss of generality that $n$ is an integer multiple of $t$ (otherwise, add columns to $\b{A}$ and elements to $\b{b}$ with  value $-\infty$ until the condition is met).
	The idea is to split $\b{A}$ and $\b{b}$ into $n/t$ blocks, each of size $m\times t$ and $t\times 1$ respectively:
	\[
		\b{A} = (\b{A}_1 | \cdots | \b{A}_{n/t}),\quad\b{b} = (\b{b}_1 | \cdots | \b{b}_{n/t}).
	\]
	Observe that $
			(\b{A}*\b{b})[i] 
			= \max_{\ell = 1}^{n/t} \left\{ (\b{A}_\ell * \b{b}_\ell)[i] \right\}.
	$
	This immediately leads to the following algorithm. First, we preprocess each block $\b{A}_\ell$ in $U(m,t)$ time with the given algorithm, so that the product $\b{A}_\ell * \b{b}_\ell$ can be later computed in $T(m,t)$ time. As soon as the vector $\b{b}$ is received, compute $\b{m}_\ell = \b{A}_\ell * \b{b}_\ell$ for all $\ell = 1,\dots, n/t$, and finally the output vector by $(\b{A}*\b{b})[i] = \max_{\ell = 1}^{n/t} \left\{ \b{m}_\ell[i] \right\}$.
	
	The time analysis is straightforward. The computation of each $\b{m}_\ell$ takes $T(m,t)$ time. There are $n/t$ such computations and merging the results takes $\bigoh(m\cdot n/t)$ time, yielding a total time of $\bigoh(n/t\cdot (T(m,t)+ m))$. The total preprocessing time is $\bigoh(n/t\cdot U(m,t))$. 
\end{proof}
	Before continuing with the next section, where the main result of this paper will be discussed, we state the following theorem, whose proof is available in the Appendix of this paper.
	\begin{restatable}{thm}{thmexpdominance}\label{thm:expdominance}
		Problem~\ref{prob:dominance} can be solved in $\bigoh(d\log |\Bcal| + |\delta_\Bcal(\b{p})|)$ time per query, after a $\bigoh(|\Bcal|^{d+1})$ time and space preprocessing.
	\end{restatable}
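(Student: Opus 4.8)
The plan is to spend the generous $\bigoh(|\Bcal|^{d+1})$ preprocessing budget building an explicit lookup table, so that each query reduces to $d$ binary searches followed by reading off a precomputed list. Write $N=|\Bcal|$. The key observation is that, although the query $\b{p}$ ranges over a continuum of vectors, $\delta_\Bcal(\b{p})$ depends on $\b{p}$ only through a discrete \emph{rank vector}. For each dimension $k=1,\dots,d$, I would collect and sort the at most $N$ distinct values taken by the $k$-th coordinate over $\Bcal$, and for a scalar $x$ let $\mathrm{rk}_k(x)$ be the number of these values that are $\le x$ (computable by binary search in $\bigoh(\log N)$ time). Then for every $\b{b}\in\Bcal$ and every query $\b{p}$ the scalar test $\b{b}[k]\le\b{p}[k]$ is equivalent to $\mathrm{rk}_k(\b{b}[k])\le\mathrm{rk}_k(\b{p}[k])$; hence, writing $\b{s}(\b{b})=(\mathrm{rk}_1(\b{b}[1]),\dots,\mathrm{rk}_d(\b{b}[d]))$ and $\b{r}(\b{p})=(\mathrm{rk}_1(\b{p}[1]),\dots,\mathrm{rk}_d(\b{p}[d]))$, we have $\b{b}\preceq\b{p}$ iff $\b{s}(\b{b})\preceq\b{r}(\b{p})$. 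Thus $\delta_\Bcal(\b{p})$ is determined entirely by $\b{r}(\b{p})$, which lives in a grid of at most $(N+1)^d=\bigoh(N^d)$ cells (in the regime $d\le N$ relevant here).

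First I would allocate a $d$-dimensional array indexed by this grid, each cell destined to hold the list of points dominated by the query corresponding to that cell. To fill it, I would iterate over the $N$ points rather than over the cells: a point $\b{b}$ belongs exactly to the cells of the ``upper box'' $\{\b{r}:\b{s}(\b{b})\preceq\b{r}\}$, so I append the index of $\b{b}$ to each cell in that box. Summing the box sizes over all points gives a total list content of $\sum_{\b{b}\in\Bcal}\prod_{k=1}^d(m_k+1-s_k)$, where $\b{s}(\b{b})=(s_1,\dots,s_d)$ and $m_k\le N$ is the number of distinct values in dimension $k$; this is $\le N\,(N+1)^d=\bigoh(N^{d+1})$, which bounds the space and, provided each cell visit costs $\bigoh(1)$, also the construction time.

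A query for $\b{p}$ then computes $\b{r}(\b{p})$ with $d$ binary searches in $\bigoh(d\log N)$ time, converts it to a flat array index in $\bigoh(d)$ time, and copies out the stored list in $\bigoh(|\delta_\Bcal(\b{p})|)$ time, for a total of $\bigoh(d\log N+|\delta_\Bcal(\b{p})|)$, as claimed. Note that this handles arbitrary query vectors, not only those in $\Bcal$, which is exactly what Lemma~\ref{lem:dominance to product} invokes.

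The delicate point, and the one I expect to require the most care, is the preprocessing accounting. The naive ``for each cell, test each point'' construction costs $\bigoh(d\,N^{d+1})$, and even the per-point enumeration above still spends $\bigoh(d)$ per visited cell if the flat index is recomputed from scratch; shaving this stray factor of $d$ down to the target $\bigoh(N^{d+1})$ requires enumerating each upper box by nested loops that maintain the flattened index incrementally, adding the appropriate stride whenever a coordinate advances, so that every one of the $\bigoh(N^{d+1})$ cell visits is $\bigoh(1)$. The $\bigoh(Nd\log N)$ cost of sorting the coordinate values and computing the points' ranks is then absorbed by the $N^{d+1}$ term, and correctness of the rank reduction together with the direct $\bigoh(|\delta_\Bcal(\b{p})|)$ read-out is immediate once the table is in place.
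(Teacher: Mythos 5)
Your proposal is correct and takes essentially the same route as the paper's own proof: reduce each query to a rank vector obtained by $d$ binary searches over per-coordinate sorted lists, and precompute a lookup table indexed by all $\bigoh(|\Bcal|^d)$ possible rank vectors, storing the answer list in each cell, within the $\bigoh(|\Bcal|^{d+1})$ preprocessing budget. If anything, your per-point ``upper box'' filling of the table (with incrementally maintained flat indices) is more careful about the construction-time accounting than the paper, which simply asserts that populating the table takes $\bigoh(|\Bcal|^{d+1})$ time.
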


\section{An $\bigoh(n^2/\log n)$ algorithm (\textsc{gdfv})}
\begin{lem}\label{lem:online geometric dominance}
	There exists an algorithm solving Problem~\ref{prob:dominance}
	in $\bigoh(d\,c_\eps^{d} \, |\Bcal|^{\eps} + |\delta_\Bcal(\b{p})|)$ time for every $\eps \in (0,1]$, where $c_\eps \coloneqq 1 / (2^\eps -1)$.
	The preprocessing requires $\bigoh({c'_\eps}^d\,|\Bcal|^{1+\eps})$ time and memory for every $\eps \in (0,\log_2 3/2]$, where $c'_\eps \coloneqq 1/(2^{1+\eps}-2)$.
\end{lem}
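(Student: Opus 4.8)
The plan is to build a recursive, Bentley-style multidimensional divide-and-conquer structure, transposed to the online (query-answering) setting as the analogy with Chan's algorithm suggests, and to let the constants $c_\eps$ and $c'_\eps$ fall out of making the two inductive bounds (query and preprocessing) tight. A node is responsible for a subset $S\subseteq\Bcal$ of $N$ points and an \emph{active coordinate} $k$ (the root has $S=\Bcal$, $k=d$). I would sort $S$ by coordinate $k$, take its median value $m$, and split $S$ into a low half $S_{\le}$ (coordinate $k$ at most $m$) and a high half $S_{>}$, each of size at most $\lceil N/2\rceil$. The node stores three children: the structures for $S_{\le}$ and for $S_{>}$, both with active coordinate $k$, and a \emph{cross} structure for $S_{\le}$ with active coordinate $k-1$. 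To answer a query $\b{p}$, I compare $\b{p}[k]$ with $m$: if $\b{p}[k]\le m$ then every point of $S_{>}$ has $k$-th coordinate $>m\ge\b{p}[k]$ and cannot be dominated, so I recurse only into $S_{\le}$ at coordinate $k$; if $\b{p}[k]>m$ then every point of $S_{\le}$ already meets the $k$-th constraint, so I query the cross structure on $S_{\le}$ (active coordinate $k-1$) and also recurse into $S_{>}$ at coordinate $k$. When the active coordinate reaches $0$, all $d$ original constraints have been discharged (one per descent into a cross structure), so the entire current subset lies in $\delta_\Bcal(\b{p})$ and is reported wholesale; a singleton leaf with positive active coordinate is resolved by testing its remaining coordinates directly.

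\emph{Query time.} Discarding reporting, the worst case is the $\b{p}[k]>m$ branch, giving $Q(d,N)=Q(d,N/2)+Q(d-1,N/2)+\bigoh(1)$. I would show $Q(d,N)=\bigoh(c_\eps^{\,d} N^\eps)$ by induction on the recursion depth; the inductive step requires
\[
c_\eps^{\,d}(N/2)^\eps+c_\eps^{\,d-1}(N/2)^\eps\le c_\eps^{\,d}N^\eps,
\]
which, after dividing by $c_\eps^{\,d-1}N^\eps$, becomes $2^{-\eps}(c_\eps+1)\le c_\eps$, i.e. $(2^\eps-1)\,c_\eps\ge 1$ — holding with equality exactly for $c_\eps=1/(2^\eps-1)$, and yielding $c_\eps\ge 1$ precisely when $\eps\le 1$, which is the stated query range. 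The extra factor $d$ comes from the singleton leaves (each costing $\bigoh(d)$) together with the $\bigoh(c_\eps^{\,d} N^\eps)$ bound on the number of nodes, while every reported point is charged to $|\delta_\Bcal(\b{p})|$.

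\emph{Preprocessing.} Since the node stores both halves at coordinate $k$ plus one cross half at coordinate $k-1$, the cost obeys $P(d,N)=2\,P(d,N/2)+P(d-1,N/2)+\bigoh(N)$, the additive term covering median selection and splitting. Inducting with $P(d,N)\le {c'_\eps}^{\,d}N^{1+\eps}$, the step reduces to $2c'_\eps+1\le 2^{1+\eps}c'_\eps$, i.e. $(2^{1+\eps}-2)\,c'_\eps\ge 1$, an equality for $c'_\eps=1/(2^{1+\eps}-2)$; demanding $c'_\eps\ge 1$ (needed to swallow the additive $\bigoh(N)$ and the base cases) forces $2^{1+\eps}\le 3$, i.e. $\eps\le\log_2(3/2)$, exactly the stated range. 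The identical recurrence bounds the space.

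\emph{Main obstacle.} The constants drop out mechanically once the recurrences are in place, so the real care lies in the bookkeeping that keeps the reporting output-sensitive: I must verify that descending into a cross structure genuinely discharges one coordinate, so that an active-coordinate-$0$ subset may be emitted without any further comparison and a singleton leaf contributes only to the $d\,c_\eps^{\,d} N^\eps$ overhead rather than to the $|\delta_\Bcal(\b{p})|$ term. Arranging the three-way split and the branching so that the two recurrences hold with precisely the claimed additive terms is the crux.
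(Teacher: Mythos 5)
Your construction is the same as the paper's: a three-child median-split tree (low half, high half, and a cross structure on the low half with one fewer coordinate), the same query and preprocessing recurrences, and the same constants. However, the step you wave off as mechanical is exactly where the argument as written fails. The constants $c_\eps$ and $c'_\eps$ are chosen so that the homogeneous inequalities hold with \emph{equality}: $2^{-\eps}(c_\eps+1)=c_\eps$ and $2^{-(1+\eps)}(2c'_\eps+1)=c'_\eps$. Consequently there is zero slack left to absorb the additive terms in the recurrences. For preprocessing, assuming $P(d,N)\le {c'_\eps}^d N^{1+\eps}$ and substituting into $P(d,N)=2P(d,N/2)+P(d-1,N/2)+\Theta(N)$ yields only $P(d,N)\le {c'_\eps}^d N^{1+\eps}+\Theta(N)$, so the induction does not close; and no constant factor in front of ${c'_\eps}^d N^{1+\eps}$ can fix this, since the equality scales with it. Your claim that $c'_\eps\ge 1$ is ``needed to swallow the additive $\bigoh(N)$'' is therefore wrong: $c'_\eps\ge 1$ only serves the base cases. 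The paper closes both inductions by \emph{strengthening the hypotheses}: for the query it proves $T_d(n)\le d\,c_\eps^d n^\eps$, where the factor $d$ drops to $d-1$ in the cross branch and the freed slack $c_\eps^{d-1}(n/2)^\eps\ge 1$ eats the $+1$; for preprocessing it proves $U_d(n)\le 3{c'_\eps}^d n^{1+\eps}-2n$, where the subtracted linear term regenerates exactly the additive $+n$. Your alternative accounting for the query time (a node-count bound of $\bigoh(c_\eps^d N^\eps)$ times $\bigoh(1)$, plus $\bigoh(d)$ per visited singleton leaf) is salvageable, but the node-count bound itself needs the same trick, e.g.\ proving $V(d,N)\le 2c_\eps^d N^\eps-1$ so that the subtracted constant absorbs the $+1$ per internal node.

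A second, smaller gap is your split. You require a low half $S_\le$ (coordinate $\le m$) and a \emph{strictly} greater half $S_>$, ``each of size at most $\lceil N/2\rceil$''; such a split need not exist when there are ties at the median — in the extreme case where all points share the same $k$-th coordinate, $S_>$ is empty, the subproblem does not shrink, and the recursion (and your size-based analysis) breaks down. The paper's split places ties on both sides, requiring only $\b{b}^-[d]\le\gamma\le\b{b}^+[d]$ with both halves of size exactly $N/2$, and compensates in the query logic: the high half is discarded only when $\b{p}[d]<\gamma$ holds strictly, while the low half's constraint is discharged when $\b{p}[d]\ge\gamma$. With that asymmetric pairing of non-strict split and strict query test, correctness and balance survive ties; with your pairing they do not.
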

\begin{proof}
	We develop a simple divide-and-conquer algorithm; we assume without loss of generality that $|\Bcal|$ is a power of two.
	
	\runinsec{Overview} If $d = 0$, return $\delta_\Bcal(\b{p}) = \Bcal$. If $\Bcal$ contains only one vector $\b{b}$, check if $\b{b} \preceq \b{p}$ and return either $\{\b{b}\}$ or the empty set accordingly.
	In all the other cases, split $\Bcal$ into two sets $\Bcal^-$ and $\Bcal^+$ of size $|\Bcal|/2$, according to the median $d$-th coordinate $\gamma$ of the vectors in $\Bcal$, so that $\b{b^-}[d] \leq \gamma \leq \b{b^+}[d]$ for all $\b{b^-}\in \Bcal^-$ and $\b{b^+}\in \Bcal^+$.
	Now consider the $d$-th coordinate of the query vector $\b{p}$: if it is strictly less than $\gamma$, then all the vectors in $\Bcal^+$ do not occur in the solution. Hence, solve the problem recursively on $\Bcal^-$. Otherwise, both the sets $\Bcal^+$ and $\Bcal^-$ need to be considered; however, the $d$-th coordinate for the vectors in $\Bcal^-$ is known to be $\leq \b{p}[d]$ and can be dropped. Hence, solve the problem recursively on $\Bcal^+$ and $\b{p}$ and on $(\Bcal^-)'$ and $\b{p}'$, where the apostrophe denotes the discard of the last coordinate, and merge the solutions.
	The recursive step is summarized by the following recurrence:
	\[
		\delta_\Bcal(\b{p}) =
		\begin{cases}
			\delta_{\Bcal^-}(\b{p}) & \text{if }\b{p}[d] < \gamma,\\
			\delta_{\Bcal^+}(\b{p}) \cup \delta_{(\Bcal^-)'}(\b{p}') & \text{otherwise.}
		\end{cases}
	\]

	In order to make the algorithm faster, we exploit the fact that $\Bcal$ is known in advance. At preprocessing time, we build a tree that guides the execution of the algorithm, where each node $u$ corresponds to a subproblem $\Bcal_u$ over a $d_u$-dimensional space. The root corresponds to the original set $\Bcal$. If $|\Bcal_u| \geq 2$ and $d_u \geq 1$, then the node $u$ stores the median value $\gamma$ and has three children, corresponding to the subproblems $\Bcal_u^+$, $\Bcal_u^-$ and $(\Bcal_u^-)'$. Otherwise, $u$ is a leaf storing the content of $\Bcal_u$.
	We analyze the cost of building the tree later on. For now, notice that the size of the tree is at most polynomial in $|\Bcal|$: the height is at most $\log |\Bcal|$, as the value $|\Bcal_u|$ halves at each level, so the nodes are at most $\bigoh(3^{\log_2 |\Bcal|})=\bigoh(|\Bcal|^{\log_2 3})=\bigoh(|\Bcal|^{1.59})$.
	
	\runinsec{Time analysis} Our algorithm starts from the root node, and visits recursively the nodes in the tree that are needed to solve the problem.
	When we reach a leaf $u$ with $d_u=0$, we output $\Bcal_u$ (which is not empty) in $\bigoh(|\Bcal_u|)$ time.
	If instead $d_u>0$ and $|\Bcal_u|=1$, we pay $\bigoh(d)$ time to check if $\b{b} \preceq \b{p}$, and $\bigoh(1)$ to output $\b{b}$ if needed.
	On internal nodes, we only pay constant extra time as the median coordinate $\gamma$ is known from the tree.
	The cost of producing the output is $\bigoh(|\delta_\Bcal(\b{p})|)$, and is measured separately.
	Hence, the running time is $\bigoh(T_d(|\Bcal|) + |\delta_\Bcal(\b{p})|)$ where $T_d(n)$ satisfies the linear recurrence relation
	\begin{equation}\label{eqn:geometric running time}
		\begin{aligned}
			T_d(n) &= 1 + \max
			\begin{cases}
				T_d(n/2) \\
				T_{d-1}(n/2) + T_d(n/2)
			\end{cases}
			\\
			&= T_{d-1}(n/2) + T_d(n/2) + 1,
		\end{aligned}
	\end{equation}
	with base cases $T_d(1) = d$ and $T_0(n) = 0$. (The time required to handle this last case is included in $\bigoh(|\delta_\Bcal(\b{p})|)$).
	We show by induction that
	\[
		T_d(n) \leq \overline{T}_d(n) \coloneqq d\,c_\eps^d\,n^\eps,
	\]
	for any chosen $\eps\in (0,1]$, where $c_\eps \coloneqq 1/(2^\eps-1)$.
	Notice that $c_\eps \geq 1$ for $\eps\in (0,1]$, thus the statement is true for the base cases as $\overline{T}_d(n)\geq d$.
	Assuming the inductive hypothesis, we obtain for $n\geq 2$ and $d \geq 1$:
	\[
		\begin{aligned}
			T_d(n) &\leq \overline{T}_d(n/2)+\overline{T}_{d-1}(n/2)+1 \\
			&= d\,c_\eps^d\,(n/2)^\eps + (d-1)\,c_\eps^{d-1}\,(n/2)^\eps + 1 \\
			&\leq d\,c_\eps^d\,(n/2)^\eps + d\,c_\eps^{d-1}\,(n/2)^\eps \\
			&= d\,c_\eps^d \, n^\eps \, \frac{1+c_\eps^{-1}}{2^\eps}
			= \overline{T}_d(n)\,\frac{1+2^\eps-1}{2^\eps} = \overline{T}_d(n),
		\end{aligned}
	\]
	completing the induction. Thus, the time complexity of the algorithm is $\bigoh(d\,c_\eps^d\,|\Bcal|^\eps + |\delta_\Bcal(\b{p})|)$.

	\runinsec{Preprocessing} The tree is built starting from the root. Finding the median $d$-th coordinate $\gamma$, computing $\Bcal_u^+$, $\Bcal_u^-$ and $(\Bcal_u^-)'$, and storing the data in the node $u$, all require $\bigoh(|\Bcal_u|)$ time and memory. Hence, the time and memory cost to build the tree is $\bigoh(U_d(|\Bcal|))$, where $U_d(n)$ satisfies the recurrence
	\[
		U_d(n) = 2U_d(n/2) + U_{d-1}(n/2) + n
	\]
	with base cases $U_0(n)=n$ and $U_d(1)=1$.
	We show by induction that
	\[
		U_d(n) \leq \overline{U}_d(n) \coloneqq 3 {c'_\eps}^d\,n^{1+\eps} - 2n
	\]
	for any chosen $\eps\in (0,\log_2 3/2]$, where $c'_\eps \coloneqq 1/(2^{1+\eps}-2)$.
	Notice that $c'_\eps \geq 1$ for $\eps\in (0,\log_2 3/2]$. Hence, the statement is true for the base cases, as $\overline{U}_d(n)\geq 3{c'_\eps}^d\,n^{1+\eps} - 2n\geq 3n-2n \geq n$.
	Assuming the inductive hypothesis, we obtain for $n\geq 2$ and $d \geq 1$:
	\[
		\begin{aligned}
			U_d(n) &\leq 2\,\overline{U}_d(n/2) + \overline{U}_{d-1}(n/2) + n\\
			&= 2 \cdot (3\,{c'_\eps}^d\,(n/2)^{1+\eps} - n) \\
			& \qquad + (3\,{c'_\eps}^{d-1}\,(n/2)^{1+\eps} - n) + n\\
			&= 2\cdot 3\,{c'_\eps}^d\,(n/2)^{1+\eps} + 3\,{c'_\eps}^{d-1}\,(n/2)^{1+\eps} - 2n\\
		\end{aligned}
	\]\[
		\hspace{-1.4cm}\begin{aligned}
			&= 3\,{c'_\eps}^d\, n^{1+\eps} \cdot \frac{2+{c'_\eps}^{-1}}{2^{1+\eps}} - 2n\\
			&= 3\,{c'_\eps}^d\, n^{1+\eps} - 2n = \overline{U}_d(n).
		\end{aligned}
	\]
	completing the induction. Hence, the time and memory cost of the preprocessing phase is $\bigoh(U_d(|\Bcal|))=\bigoh({c'_\eps}^d \, n^\eps)$.
\end{proof}

We remark that the time complexity and the recurrence of the simple algorithm given in the above proof differ from those of Chan by necessity, as the online setting requires each vector to be treated separately. On the other hand, his result follows directly from ours:

\begin{thm}[\cite{chan2008all}, Lemma 2.1]
	Given $n$ red/blue points in $\mathbb{R}_*^d$ we can report all $K$ dominating pairs in
	$\bigoh(k_\eps^d \,n^{1+\eps} + K)$ time for any $\eps \in (0,1)$, where $k_\varepsilon \coloneqq 2^\eps/(2^\eps - 1)$.
\end{thm}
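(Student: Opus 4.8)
The plan is to instantiate the online dominance structure of Lemma~\ref{lem:online geometric dominance} on one colour class and then sweep the other colour class past it as a batch of queries. Concretely, split the $n$ input points into the set $B$ of blue points and the set $R$ of red points, with $|B|+|R| = n$, and assume without loss of generality that we must report every pair $(\b{b},\b{r})$ with $\b{b}\in B$, $\b{r}\in R$ and $\b{b}\preceq\b{r}$ (the opposite orientation is obtained by negating all coordinates, which reverses $\preceq$, or by exchanging the two colours). I would set $\Bcal\coloneqq B$ and preprocess it with the algorithm of Lemma~\ref{lem:online geometric dominance}. Then, for each red point $\b{r}\in R$, I issue the query $\b{p}\coloneqq\b{r}$; note that Problem~\ref{prob:dominance} permits \emph{any} input vector, so $\b{r}\notin\Bcal$ is not an issue. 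The returned set $\delta_B(\b{r})$ is exactly the set of blue points dominated by $\b{r}$, so emitting $\{(\b{b},\b{r}):\b{b}\in\delta_B(\b{r})\}$ reports, over all $\b{r}\in R$, precisely the $K$ dominating pairs, each exactly once.

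Next I would bound the running time. A single query costs $\bigoh(d\,c_\eps^d\,|B|^\eps + |\delta_B(\b{r})|)$, so summing over $R$ and using $\sum_{\b{r}\in R}|\delta_B(\b{r})| = K$ gives $\bigoh(|R|\,d\,c_\eps^d\,|B|^\eps + K)$; applying $|R|\le n$ and $|B|^\eps\le n^\eps$ collapses the first term to $\bigoh(d\,c_\eps^d\,n^{1+\eps}+K)$. The only genuine work is reconciling the constant bases with Chan's statement. Since $k_\eps = 2^\eps/(2^\eps-1)=2^\eps c_\eps$, we have $k_\eps^d = 2^{\eps d}c_\eps^d$, and the linear factor $d$ is harmless: for fixed $\eps>0$ the function $d\mapsto d\,2^{-\eps d}$ attains over $d\ge 1$ a maximum $C_\eps = 1/(e\,\eps\ln 2)$, whence $d\,c_\eps^d \le C_\eps\,2^{\eps d}c_\eps^d = C_\eps\,k_\eps^d$, and the query cost is $\bigoh(k_\eps^d\,n^{1+\eps}+K)$.

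It remains to show the preprocessing is dominated. When $\eps\le\log_2(3/2)$ the lemma charges $\bigoh({c'_\eps}^d\,n^{1+\eps})$, and since $c'_\eps = 1/(2^{1+\eps}-2)=c_\eps/2<k_\eps$ this is already $\bigoh(k_\eps^d\,n^{1+\eps})$. For the remaining range $\eps\in(\log_2(3/2),1)$, I would exploit that the tree built during preprocessing does not depend on $\eps$ (the parameter enters only the analysis), so I may charge its construction at the threshold $\eps_0 = \log_2(3/2)$, where $c'_{\eps_0}=1/(2^{1+\eps_0}-2)=1$ and the cost is $\bigoh(n^{1+\eps_0})=\bigoh(n^{\log_2 3})\le\bigoh(n^{1+\eps})$ because $\eps_0\le\eps$. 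Either way the preprocessing is $\bigoh(k_\eps^d\,n^{1+\eps})$, so the total time matches the claimed $\bigoh(k_\eps^d\,n^{1+\eps}+K)$.

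I expect the main (though still minor) obstacle to be exactly this bookkeeping with the constant bases and the admissible $\eps$-range: the online lemma's query bound carries an extra linear factor $d$ and uses base $c_\eps$ rather than $k_\eps$, while its preprocessing guarantee is only stated for $\eps\le\log_2(3/2)$. The crux is therefore the elementary inequality $d=\bigoh(2^{\eps d})$ together with the observation that the preprocessing tree is $\eps$-independent; no new geometric idea is required, and the reduction is simply a one-directional batch of online dominance queries.
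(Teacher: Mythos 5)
Your proposal is correct and follows essentially the same route as the paper's proof: preprocess the blue points via Lemma~\ref{lem:online geometric dominance}, query each red point for its dominators, and absorb the factor $d\,c_\eps^d$ into $k_\eps^d$ (the paper does this via a binomial-expansion footnote rather than your calculus bound $d \le C_\eps\,2^{\eps d}$, but the effect is identical). If anything, your explicit treatment of the preprocessing cost for $\eps \in (\log_2(3/2), 1)$ --- charging the $\eps$-independent tree at the threshold $\eps_0 = \log_2(3/2)$ --- is more careful than the paper, which applies the lemma's preprocessing bound outside its stated $\eps$-range without comment.
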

\begin{proof}
	After reading the set $\Bcal$ of blue points, preprocess them as described in the proof of Lemma~\ref{lem:online geometric dominance}. Then, for each red point $\b{p}$, perform a query to find all the dominators of $\b{p}$, i.e. $\delta_\Bcal(\b{p})$; simply flush out the union of all the dominating pairs obtained. By Lemma~\ref{lem:online geometric dominance}, the cost of the preprocessing is $\bigoh((2^{1+\eps}-2)^{-d} \,n^{1+\eps}) = \bigoh(k_\eps^d \,n^{1+\eps})$. On the other hand, each of the $n$ queries takes time $\bigoh(d(2^\eps - 1)^{-d}\, n^\eps)$, that is\footnote{Indeed, using the binomial expansion formula we have:\[k_\eps^d = \left(1+ \frac{1}{2^\eps - 1}\right)^d \ge d\left(\frac{1}{2^\eps - 1}\right)^{d-1} = \Omega( d\,c_\eps^d ).\]\vspace{-5mm}} $\bigoh(k_\eps^d\,n^{\eps})$, excluding the output; the overhead due to the actual output of the pairs is $\bigoh(K)$. The final cost of the algorithm is therefore $\bigoh(c_\eps^d \,n^{1+\eps} + K)$ as desired.
\end{proof}

Finally, Lemmas~\ref{lem:product to viterbi}, \ref{lem:online geometric dominance}, and \ref{lem:splice} combine into the following.

\begin{thm}\label{thm:gdfv}
	There exists an algorithm solving the \textsc{mapd} problem in $\bigoh(mn^2/\log n)$ time after a polynomial preprocessing of the model, for any HMM of size $n$ and observation sequence of length $m$.
\end{thm}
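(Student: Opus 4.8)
The plan is to chain together the three cited lemmas, instantiating the online geometric dominance algorithm of Lemma~\ref{lem:online geometric dominance} at the foot of the chain, and to close with a careful parameter choice $t=\Theta(\log n)$ together with a fixed $\eps\in(0,1)$. Concretely, I would read the time-invariant matrix $\log{}^t\mathcal{T}$ as the fixed $n\times n$ matrix of an \ORMV\ instance, so that by Lemma~\ref{lem:product to viterbi} it suffices to exhibit an \ORMV\ algorithm for $n\times n$ matrices running in $\bigoh(n^2/\log n)$ per query after polynomial preprocessing; the $\bigoh(mn^2/\log n)$ bound for \textsc{mapd} then follows immediately by multiplying by the $m$ observations.

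To build that square \ORMV\ algorithm, I would first instantiate Lemma~\ref{lem:online geometric dominance} with dimension $d=t$ and set size $|\Bcal|=n$, obtaining an online dominance routine with query time $\bigoh(t\,c_\eps^{t}\,n^{\eps})$ after $\bigoh({c'_\eps}^{t}\,n^{1+\eps})$ preprocessing. Feeding this into Lemma~\ref{lem:dominance to product} (with $m=n$ rows and $t$ columns) yields an \ORMV\ algorithm for narrow $n\times t$ matrices in $\bigoh(n + t^2 c_\eps^{t} n^{\eps})$ time after $\bigoh(nt^2 + t\,{c'_\eps}^{t} n^{1+\eps})$ preprocessing. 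Splicing $n/t$ such blocks with Lemma~\ref{lem:splice} then delivers a full $n\times n$ \ORMV\ algorithm with query time
\[
	T(n,n)=\bigoh\!\left(\frac{n^2}{t} + t\,c_\eps^{t}\,n^{1+\eps}\right)
\]
and preprocessing $\bigoh(n^2 t + {c'_\eps}^{t} n^{2+\eps})$.

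It remains to choose $t$ and $\eps$. Taking $t=\alpha\log_2 n$ turns the first term into $\bigoh(n^2/\log n)$ for any constant $\alpha>0$. The delicate part — and the real crux of the argument — is the second term: since $c_\eps^{t}=n^{\alpha\log_2 c_\eps}$, it equals $\bigoh(\log n\cdot n^{\,1+\eps+\alpha\log_2 c_\eps})$, so I must guarantee $1+\eps+\alpha\log_2 c_\eps<2$, i.e.\ $\eps+\alpha\log_2 c_\eps<1$. This is where the freedom in $\eps$ is spent: I would fix $\eps=\tfrac12$, which lies in $(0,\log_2 3/2]$ so that the preprocessing bound of Lemma~\ref{lem:online geometric dominance} applies and $c_\eps>1$, freezing $c_\eps$ to a constant; then I pick $\alpha$ small enough that $\alpha<(1-\eps)/\log_2 c_\eps$. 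With such a choice the second term is $o(n^2/\log n)$, so $T(n,n)=\bigoh(n^2/\log n)$ as required.

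Finally I would verify that the preprocessing stays polynomial: with $t=\Theta(\log n)$ the bound $\bigoh(n^2 t + {c'_\eps}^{t} n^{2+\eps})$ becomes $\bigoh(n^2\log n + n^{\,2+\eps+\alpha\log_2 c'_\eps})$, a fixed polynomial in $n$ since $c'_\eps$ is constant; this cost is independent of $m$ and hence negligible once $m>n$. Applying Lemma~\ref{lem:product to viterbi} to this square \ORMV\ algorithm then yields the claimed $\bigoh(mn^2/\log n)$ bound. I expect the only genuine obstacle to be the two-parameter balancing in the third step: keeping the error term $t\,c_\eps^{t}\,n^{1+\eps}$ strictly subquadratic forces $\alpha$ to be a sufficiently small constant, whose existence relies on $\eps<1$ (so that $\log_2 c_\eps$ is a positive constant), and I would double-check that this choice is simultaneously compatible with the preprocessing constraint $\eps\le\log_2 3/2$.
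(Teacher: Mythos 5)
Your proposal is correct and follows essentially the same route as the paper's own proof: the identical chain of Lemma~\ref{lem:product to viterbi} (reducing \textsc{mapd} to square \ORMV), Lemma~\ref{lem:online geometric dominance} fed through Lemma~\ref{lem:dominance to product} and Lemma~\ref{lem:splice}, with block width $t=\alpha\log_2 n$. The only divergence is the final parameter balancing: the paper couples $\eps=2\alpha$ over $\alpha\in(0,1/2)$ and checks that the exponent $1+\alpha(2-\log_2(4^\alpha-1))$ stays below $2$, whereas you fix $\eps=\tfrac12\le\log_2(3/2)$ and then take $\alpha$ a sufficiently small constant --- an equally valid (and arguably more transparent) way of keeping the error term $t\,c_\eps^{t}\,n^{1+\eps}$ strictly subquadratic.
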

\begin{proof}
	It is enough to show how to solve the \ORMV\ problem for any $n\times n$ matrix in $\bigoh(n^2/\log n)$ time. To this end, apply Lemma~\ref{lem:splice} with $m = n$ and $t = \alpha\log_2 n$, where $\alpha \in (0, 1/2)\subseteq \mathbb{R}$. This gives a running time of $\bigoh(n\cdot T(\alpha \log_2 n, n) + n^2/\log n)$, where $T(d, n)$ is the cost of computing the domination set of a $d$-dimensional vector over a fixed set of $n$ points (see Problem~\ref{prob:dominance}). Substituting the bounds of Lemma~\ref{lem:online geometric dominance} into the time complexity, yields a polynomial preprocessing cost, and the following time bound for each multiplication:
	\[
		\begin{aligned}
			\bigoh((2^\eps - 1)^{-\alpha \log_2 n} \alpha n^{1+\eps} \log^2 n + n^2/\log n)=\\
			=\bigoh(n^{1 + \eps-\alpha\log_2(2^\eps - 1)}\log^2 n + n^2/\log n)
		\end{aligned}
	\]
	for all $\eps \in (0, 1)$. Setting $\eps = 2\alpha$, the exponent of the first term becomes $1 + \alpha(2 - \log_2(4^\alpha - 1)) < 2$ for all $0 < \alpha < 1/2$. Therefore, the time complexity of the algorithm is $\bigoh(n^2/\log n)$.
\end{proof}
We call the resulting algorithm \emph{geometric dominance faster Viterbi} (\textsc{gdfv}).

\section{Experimental evaluation}
\runinsec{Methodology} All the algorithms are implemented in the C++11 language, compiled using the \texttt{clang} compiler and run on the OSX 10.10.3 operating system. The main memory is a 8GB 1600MHz DDR3 RAM, and the processor is an Intel Core i7-4850HQ CPU, with 6MB shared L3 cache.
All the matrices and vectors used for the experiments have entries sampled from a uniform distribution over $(0, 1] \subseteq \mathbb{R}$.

\runinsec{Results} \emph{How does our proposed \ORMV\ algorithm for narrow matrices compare to the trivial one?} We analyze the throughput of Algorithm~\ref{algo:tablet algo simplified}, based on the geometric subroutines exposed in the proof of Lemma~\ref{lem:online geometric dominance}, comparing it with the trivial multiplication approach. For every chosen pair $(n,t)$, we run 25 tests, each of which consists of an online multiplication of a $n\times t$ matrix with $10\,000$ vectors. The results of our tests are summarized in Figure~\ref{fig:throughput mul}, where we see that our algorithm can be up to $4$ times faster than the trivial one. This is mainly due to the fact that the number of accesses to the tree is much less than $n\cdot t$, and to the lower number of comparisons needed to find the answer. See the Appendix for a more thorough analysis of the average number of accesses to the decision trees. 

\begin{figure}[h]
	\includegraphics[width=\linewidth]{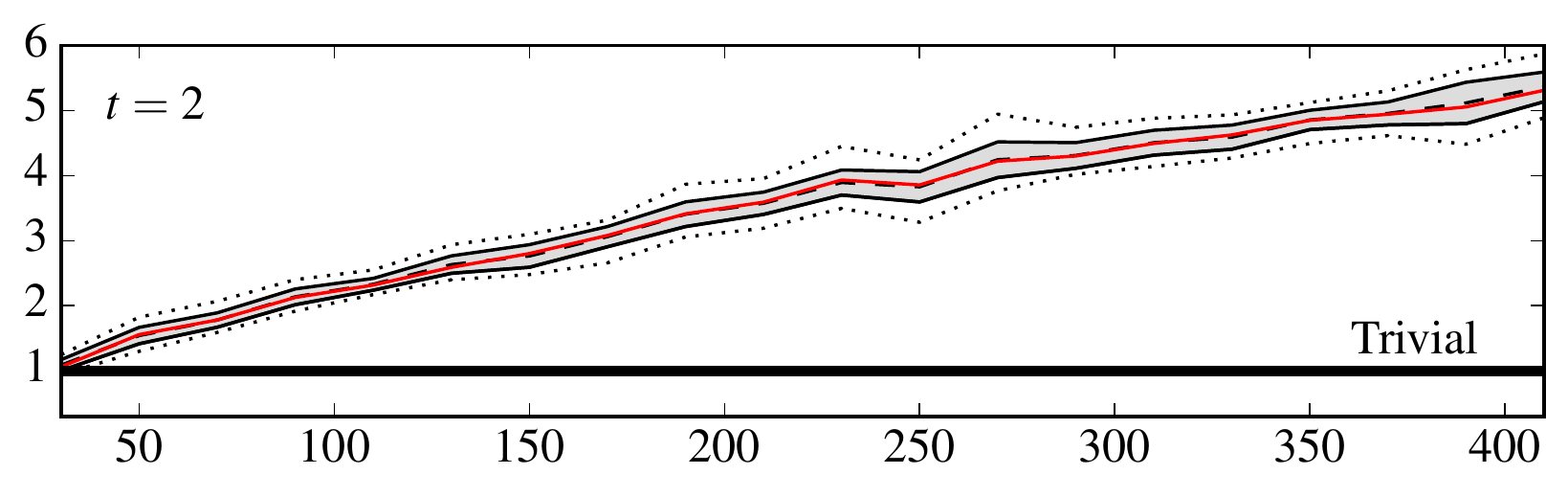}\\[-2.2mm]
	\includegraphics[width=1.03\linewidth]{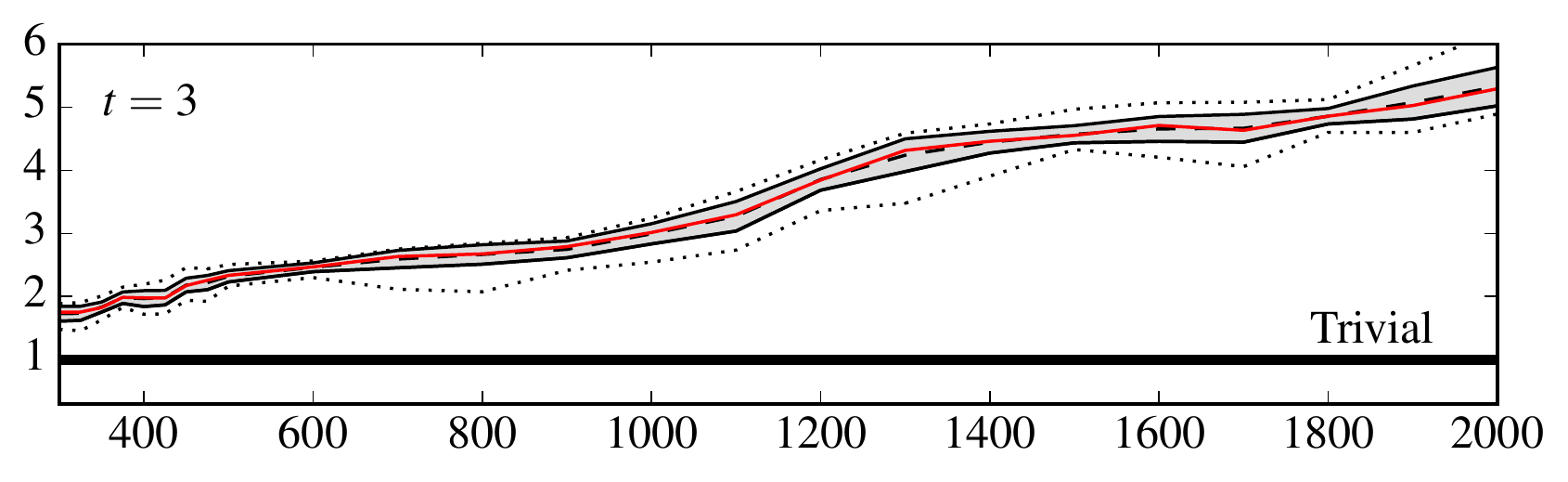}\\[-2.2mm]
	\includegraphics[width=1.038\linewidth]{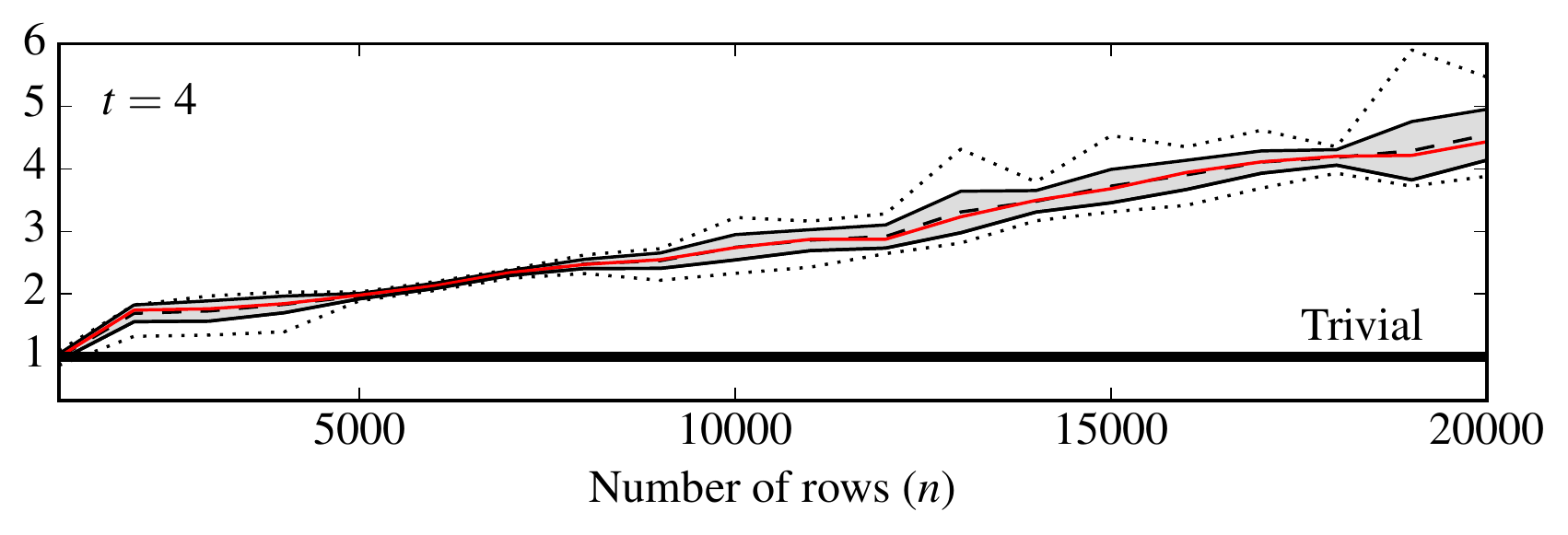}\\[-8mm]
	\caption{\label{fig:throughput mul}Relative throughput of Algorithm~\ref{algo:tablet algo simplified} when compared to the trivial quadratic algorithm on matrices of size $n\times t$ for $t=2,3,4$. A higher throughput implies faster computation. Legend: \textbf{dotted black} lines denote min and max values, \textbf{red solid} lines denote median values, \textbf{black dashed} lines denote mean values $(\mu)$, and \textbf{gray shading} denotes the range $\mu\pm\sigma$, where $\sigma$ is the standard deviation.}
\end{figure}

\emph{How does the complete \textsc{gdfv} algorithm compare with the Viterbi algorithm?} We experimentally evaluate the first phase of the $\textsc{gdfv}$ algorithm, i.e. the computation of the $q_i(s)$ values defined in Equation~\ref{eq:qis}. This is the most expensive task in the decoding of HMMs. We implement the algorithm as described in the proof of Theorem~\ref{thm:gdfv}, using $\alpha = 0.25$, that is splitting the $n\times n$ transition probability matrix $\mathcal{T}$ of the model in approximately $n/2$ blocks when $n \le 4000$. We summarize the results in Figure~\ref{fig:throughput gdfv}, where we see that our algorithm is roughly twice as fast as the Viterbi algorithm, in line with expectations. However, we note that the amount of memory required by our algorithm makes it impractical for larger values of $\alpha$. Indeed, we have verified that when the memory pressure becomes high other factors slow down the implemented algorithm, such as cache and page misses, or, for bigger allocations, the hard drive latency.

\begin{figure}[t]
	\includegraphics[width=\linewidth]{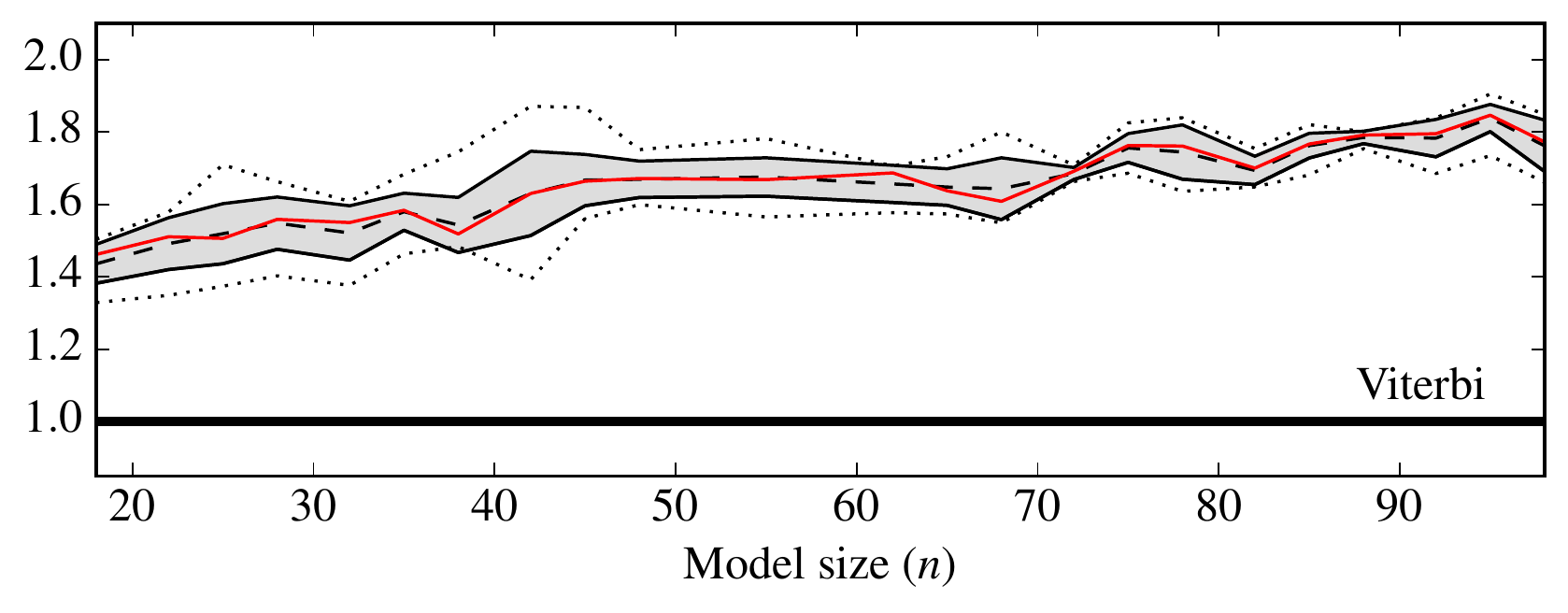}\\[-7mm]
	\caption{\label{fig:throughput gdfv}Relative throughput of the \textsc{gdfv} algorithm when compared to the Viterbi algorithm. A higher throughput implies faster computation. Legend: as in Figure~\ref{fig:throughput mul}.}
\end{figure}

\section{Conclusion and future works}
	In this paper, we give the first algorithm for the maximum \emph{a posteriori} decoding (\textsc{mapd}) of time-homogeneous Hidden Markov Models requiring asymptotically less than $\bigoh(mn^2)$ operations in the worst case. To this end, we first introduce an \emph{online} geometric dominance reporting problem, and propose a simple divide-and-conquer solution, generalizing the classical result by \cite{chan2008all}. At an intermediate step, we also give the first algorithm solving the \emph{online} matrix-vector $(\max, +)$-multiplication problem over $\Rinf$ in subquadratic time after a polynomial preprocessing of the matrix. Finally, we apply the faster multiplication to the \textsc{mapd} problem.
	
	Furthermore, we think that our proposal paves the way to several unexplored questions which we intend to explore in future works:
	\begin{itemize}[nolistsep, itemsep=0mm]
		\item cut larger polylogarithmic factors, by splitting cases in Equation~\ref{eqn:geometric running time} in a different manner, as in \cite{chan2015speeding};
		\item study and implement a more succinct version of the decision tree, in order to mitigate the memory footprint;
		\item analyze the relationship of our work with other existing
		heuristics, such as CarpeDiem \cite{esposito2009carpediem};
		\item {combine our speed-up to the one delivered by the approach in \cite{lifshits2009speeding}. Notice that this would require further assumptions on the observation sequence;}
		\item we note that the decision trees built by our algorithm could be implemented at a hardware level, resulting in specialized chips performing asymptotically less that $\bigoh(n^2)$ operations per observed symbol, in the worst case. 
		\item investigate ``truly polynomially subquadratic'' solutions for the \textsc{mapd} problem, at the expense of an exponential preprocessing of the model. As a concrete example, we present the following theorem, which is a corollary of Theorem~\ref{thm:expdominance} (a formal proof can be found in the Appendix):
		\\[-3mm] 
		\begin{restatable}{thm}{thmexphmm}\label{thm:exphmm}
			The \textsc{mapd} problem on time--homogeneous HMMs can be solved in $\bigoh(mn^{3/2}\sqrt{\log n})$ time with an $\bigoh\left(n^{1+\sqrt{n/\log n}}\right)$ time and space preprocessing.
		\end{restatable}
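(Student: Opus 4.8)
The plan is to reuse the very chain of reductions that proves Theorem~\ref{thm:gdfv}, but to feed it the exponential-preprocessing dominance data structure of Theorem~\ref{thm:expdominance} instead of the divide-and-conquer one of Lemma~\ref{lem:online geometric dominance}, and then to rebalance the block width. By Lemma~\ref{lem:product to viterbi}, it suffices to solve the \ORMV\ problem for an arbitrary $n\times n$ matrix in time $T(n,n)=\bigoh(n^{3/2}\sqrt{\log n})$ after a preprocessing of cost $U(n,n)=\bigoh(n^{1+\sqrt{n/\log n}})$; the \textsc{mapd} bounds then follow immediately, picking up only the extra factor $m$ on the running time.

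To build such an \ORMV\ algorithm, I would first apply Lemma~\ref{lem:splice} with $m=n$ to cut the $n\times n$ matrix into $n/t$ vertical blocks of width $t$, for a block width $t$ to be fixed later. Each $n\times t$ block is then processed through Lemma~\ref{lem:dominance to product}, which converts it into $t$ instances of the online geometric dominance reporting problem (Problem~\ref{prob:dominance}) in dimension $d=t$ over sets of $|\Bcal|=n$ points. Substituting the bounds of Theorem~\ref{thm:expdominance} --- query time $\bigoh(t\log n)$ excluding the output, and preprocessing $\bigoh(n^{t+1})$ --- into Lemma~\ref{lem:dominance to product} gives a per-block multiplication time $\bigoh(n+t^2\log n)$ and per-block preprocessing $\bigoh(nt^2+t\,n^{t+1})$. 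Composing with Lemma~\ref{lem:splice}, which sums over the $n/t$ blocks, yields for the full $n\times n$ matrix
\[
	T(n,n)=\bigoh\!\left(\frac{n^2}{t}+n\,t\log n\right),\qquad U(n,n)=\bigoh\!\left(n^{t+2}\right),
\]
the dominant preprocessing term $n^{t+2}$ absorbing the lower-order $n^2 t$.

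It remains to choose $t$, and this is the one delicate point. The running time is balanced, up to constants, when $t=\Theta(\sqrt{n/\log n})$, at which point $T(n,n)=\bigoh(n^{3/2}\sqrt{\log n})$ as required. However, the preprocessing exponent is exactly $t+2$, so the naive choice $t=\sqrt{n/\log n}$ would give the weaker $\bigoh(n^{2+\sqrt{n/\log n}})$; to land precisely on $\bigoh(n^{1+\sqrt{n/\log n}})$ one must shave a constant off the balancing value. Concretely I would set $t=\lfloor\sqrt{n/\log n}\rfloor-1$, so that $t+2\le 1+\sqrt{n/\log n}$ while still $t=\Theta(\sqrt{n/\log n})$, making both target bounds hold simultaneously. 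The remaining work is routine bookkeeping: checking that $t$ is a positive integer for all sufficiently large $n$, confirming that the omitted lower-order contributions are swallowed by the stated bounds, and folding the resulting \ORMV\ complexity back through Lemma~\ref{lem:product to viterbi} to obtain the claimed $\bigoh(mn^{3/2}\sqrt{\log n})$ time and $\bigoh(n^{1+\sqrt{n/\log n}})$ preprocessing for \textsc{mapd}.
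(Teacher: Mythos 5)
Your proposal follows exactly the paper's own chain of reductions (Theorem~\ref{thm:expdominance} $\to$ Lemma~\ref{lem:dominance to product} $\to$ Lemma~\ref{lem:splice} $\to$ Lemma~\ref{lem:product to viterbi}), so the approach is essentially identical; the proposal is correct. In fact, your bookkeeping is more careful than the paper's: composing the lemmas really does give preprocessing $\bigoh(n^2 t + n^{t+2})$, so the paper's stated choice $t=\sqrt{n/\log n}$ yields $\bigoh\left(n^{2+\sqrt{n/\log n}}\right)$ rather than the claimed $\bigoh\left(n^{1+\sqrt{n/\log n}}\right)$, and your adjustment $t=\lfloor\sqrt{n/\log n}\rfloor-1$ is precisely what is needed to meet the theorem's preprocessing bound while keeping $t=\Theta(\sqrt{n/\log n})$ and hence the query time at $\bigoh(n^{3/2}\sqrt{\log n})$.
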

	\end{itemize}
	

\section{Acknowledgements}
Massimo Cairo was supported by the Department of Computer Science, University of Verona under PhD grant ``Computational Mathematics and Biology''.

We would like to thank Marco Elver, Nicola Gatti, Zu Kim, and Luigi Laura for their valuable suggestions.
\bibliography{citations}

\begin{thebibliography}{}

\bibitem[\protect\citeauthoryear{Agazzi and Kuo}{1993}]{agazzi1993hidden}
Agazzi, O.~E., and Kuo, S.-s.
\newblock 1993.
\newblock Hidden markov model based optical character recognition in the
  presence of deterministic transformations.
\newblock {\em Pattern recognition} 26(12):1813--1826.

\bibitem[\protect\citeauthoryear{Bremner \bgroup et al\mbox.\egroup
  }{2006}]{bremner2006necklaces}
Bremner, D.; Chan, T.~M.; Demaine, E.~D.; Erickson, J.; Hurtado, F.; Iacono,
  J.; Langerman, S.; and Taslakian, P.
\newblock 2006.
\newblock Necklaces, convolutions, and x+ y.
\newblock In {\em Algorithms--ESA 2006}. Springer.
\newblock  160--171.

\bibitem[\protect\citeauthoryear{Chan}{2008}]{chan2008all}
Chan, T.~M.
\newblock 2008.
\newblock All-pairs shortest paths with real weights in o (n 3/log n) time.
\newblock {\em Algorithmica} 50(2):236--243.

\bibitem[\protect\citeauthoryear{Chan}{2015}]{chan2015speeding}
Chan, T.~M.
\newblock 2015.
\newblock Speeding up the four russians algorithm by about one more logarithmic
  factor.
\newblock In {\em Proceedings of the Twenty-Sixth Annual ACM-SIAM Symposium on
  Discrete Algorithms},  212--217.
\newblock SIAM.

\bibitem[\protect\citeauthoryear{Churbanov and
  Winters-Hilt}{2008}]{churbanov2008implementing}
Churbanov, A., and Winters-Hilt, S.
\newblock 2008.
\newblock Implementing em and viterbi algorithms for hidden markov model in
  linear memory.
\newblock {\em BMC bioinformatics} 9(1):224.

\bibitem[\protect\citeauthoryear{Dobosiewicz}{1990}]{dobosiewicz1990more}
Dobosiewicz, W.
\newblock 1990.
\newblock A more efficient algorithm for the min-plus multiplication.
\newblock {\em International journal of computer mathematics} 32(1-2):49--60.

\bibitem[\protect\citeauthoryear{Esposito and
  Radicioni}{2009}]{esposito2009carpediem}
Esposito, R., and Radicioni, D.~P.
\newblock 2009.
\newblock Carpediem: Optimizing the viterbi algorithm and applications to
  supervised sequential learning.
\newblock {\em The Journal of Machine Learning Research} 10:1851--1880.

\bibitem[\protect\citeauthoryear{Felzenszwalb, Huttenlocher, and
  Kleinberg}{2004}]{felzenszwalb2004fast}
Felzenszwalb, P.~F.; Huttenlocher, D.~P.; and Kleinberg, J.~M.
\newblock 2004.
\newblock Fast algorithms for large-state-space hmms with applications to web
  usage analysis.
\newblock {\em Advances in NIPS} 16:409--416.

\bibitem[\protect\citeauthoryear{Gales}{1998}]{gales1998maximum}
Gales, M.~J.
\newblock 1998.
\newblock Maximum likelihood linear transformations for hmm-based speech
  recognition.
\newblock {\em Computer speech \& language} 12(2):75--98.

\bibitem[\protect\citeauthoryear{Grice, Hughey, and
  Speck}{1997}]{grice1997reduced}
Grice, J.; Hughey, R.; and Speck, D.
\newblock 1997.
\newblock Reduced space sequence alignment.
\newblock {\em Computer applications in the biosciences : CABIOS} 13(1):45--53.

\bibitem[\protect\citeauthoryear{Haussler and
  Eeckman}{1996}]{haussler1996generalized}
Haussler, D. K.~D., and Eeckman, M. G. R. F.~H.
\newblock 1996.
\newblock A generalized hidden markov model for the recognition of human genes
  in dna.
\newblock In {\em Proc. Int. Conf. on Intelligent Systems for Molecular
  Biology, St. Louis},  134--142.

\bibitem[\protect\citeauthoryear{Henzinger \bgroup et al\mbox.\egroup
  }{2015}]{Henzinger2015}
Henzinger, M.; Krinninger, S.; Nanongkai, D.; and Saranurak, T.
\newblock 2015.
\newblock Unifying and strengthening hardness for dynamic problems via the
  online matrix-vector multiplication conjecture.
\newblock In {\em Proceedings of the Forty-Seventh Annual ACM on Symposium on
  Theory of Computing}, STOC '15,  21--30.
\newblock New York, NY, USA: ACM.

\bibitem[\protect\citeauthoryear{Huang, Ariki, and
  Jack}{1990}]{huang1990hidden}
Huang, X.~D.; Ariki, Y.; and Jack, M.~A.
\newblock 1990.
\newblock {\em Hidden Markov models for speech recognition}, volume 2004.
\newblock Edinburgh university press Edinburgh.

\bibitem[\protect\citeauthoryear{Kaji \bgroup et al\mbox.\egroup
  }{2010}]{kaji2010efficient}
Kaji, N.; Fujiwara, Y.; Yoshinaga, N.; and Kitsuregawa, M.
\newblock 2010.
\newblock Efficient staggered decoding for sequence labeling.
\newblock In {\em Proceedings of the 48th Annual Meeting of the Association for
  Computational Linguistics},  485--494.
\newblock Association for Computational Linguistics.

\bibitem[\protect\citeauthoryear{Kupiec}{1992}]{kupiec1992robust}
Kupiec, J.
\newblock 1992.
\newblock Robust part-of-speech tagging using a hidden markov model.
\newblock {\em Computer Speech \& Language} 6(3):225--242.

\bibitem[\protect\citeauthoryear{Lafferty, McCallum, and
  Pereira}{2001}]{lafferty2001conditional}
Lafferty, J.~D.; McCallum, A.; and Pereira, F. C.~N.
\newblock 2001.
\newblock Conditional random fields: Probabilistic models for segmenting and
  labeling sequence data.
\newblock In {\em Proceedings of the Eighteenth International Conference on
  Machine Learning}, ICML '01,  282--289.
\newblock San Francisco, CA, USA: Morgan Kaufmann Publishers Inc.

\bibitem[\protect\citeauthoryear{Liberty and Zucker}{2009}]{liberty2009mailman}
Liberty, E., and Zucker, S.~W.
\newblock 2009.
\newblock The mailman algorithm: A note on matrix--vector multiplication.
\newblock {\em Information Processing Letters} 109(3):179--182.

\bibitem[\protect\citeauthoryear{Lifshits \bgroup et al\mbox.\egroup
  }{2009}]{lifshits2009speeding}
Lifshits, Y.; Mozes, S.; Weimann, O.; and Ziv-Ukelson, M.
\newblock 2009.
\newblock Speeding up hmm decoding and training by exploiting sequence
  repetitions.
\newblock {\em Algorithmica} 54(3):379--399.

\bibitem[\protect\citeauthoryear{M{\"a}kinen \bgroup et al\mbox.\egroup
  }{2015}]{makinen2015genome}
M{\"a}kinen, V.; Belazzougui, D.; Cunial, F.; and Tomescu, A.~I.
\newblock 2015.
\newblock {\em Genome-Scale Algorithm Design}.
\newblock Cambridge University Press.

\bibitem[\protect\citeauthoryear{{\v{S}}r{\'a}mek}{2007}]{vsramek2007line}
{\v{S}}r{\'a}mek, R.
\newblock 2007.
\newblock The on-line viterbi algorithm.
\newblock {\em KAI FMFI UK, Bratislava, m{\'a}j}.

\bibitem[\protect\citeauthoryear{Starner, Weaver, and
  Pentland}{1998}]{starner1998real}
Starner, T.; Weaver, J.; and Pentland, A.
\newblock 1998.
\newblock Real-time american sign language recognition using desk and wearable
  computer based video.
\newblock {\em Pattern Analysis and Machine Intelligence, IEEE Transactions on}
  20(12):1371--1375.

\bibitem[\protect\citeauthoryear{Viterbi}{1967}]{viterbi1967error}
Viterbi, A.~J.
\newblock 1967.
\newblock Error bounds for convolutional codes and an asymptotically optimum
  decoding algorithm.
\newblock {\em Information Theory, IEEE Transactions on} 13(2):260--269.

\bibitem[\protect\citeauthoryear{Williams}{2007}]{williams2007matrix}
Williams, R.
\newblock 2007.
\newblock Matrix-vector multiplication in sub-quadratic time:(some
  preprocessing required).
\newblock In {\em SODA}, volume~7,  995--1001.

\bibitem[\protect\citeauthoryear{Yoon}{2009}]{yoon2009hidden}
Yoon, B.-J.
\newblock 2009.
\newblock Hidden markov models and their applications in biological sequence
  analysis.
\newblock {\em Current genomics} 10(6):402.

\end{thebibliography}
\bibliographystyle{aaai}

\ifshowsupplementalmaterial
\clearpage\newpage
\appendix
\section{Appendix}
\vspace{3mm}

\section{Missing proofs}
We prove Theorem~\ref{thm:expdominance} and Theorem~\ref{thm:exphmm}.
\thmexpdominance*
\begin{proof}
	For every coordinate $k=1,\dots,d$, define the sequence $\b{b}^k_1, \dots, \b{b}^k_{|\Bcal|}$ containing all the vectors in $\Bcal$ ordered by their $k$--th coordinate. Namely:
	\[
		\b{b}^k_1[k] \leq \b{b}^k_2[k] \leq \dots \leq \b{b}^k_{|\Bcal|}[k].
	\]
	Given an input vector $\b{p}$, let $r_k\in \{0,\dots,|\Bcal|\}$ be the last position in which $\b{p}$ can be inserted in the sequence $\b{b}^k_1, \dots, \b{b}^k_{|\Bcal|}$, while maintaining its ordering according to the $k$--th coordinate. That is, $r_k$ is the only index that satisfies:
	\[
		\b{b}^k_1[k], \dots, \b{b}^k_{r_k}[k] \;\leq\; \b{p}[k] \;<\; \b{b}^k_{r_k + 1}[k], \dots, \b{b}^k_{|\Bcal|}[k].
	\]
	Now, observe that the solution to a query can be obtained as:
	\[
		\begin{aligned}
			\delta_\Bcal(\b{p}) &= \{ \b{b} \in \Bcal : \b{b}[k] \leq \b{p}[k] \quad \forall k \} \\
			&= \bigcap_k \{ \b{b}^k_1, \cdots, \b{b}^k_{r_k} \}.
		\end{aligned}
	\]
	The algorithm works as follows.
	At preprocessing time, we order the vectors according to each coordinate $k=1,\dots,d$. Then, we precompute and store in a lookup table the solution $\delta_\Bcal(\b{p}) = \bigcap_k \{ \b{b}^k_1, \cdots, \b{b}^k_{r_k} \}$ for each of the $|\Bcal+1|^d$ possible choices of the values $r_1,\dots,r_d$. This takes $\bigoh(|\Bcal|^{d+1})$ time and space, and dominates the cost of the preprocessing phase.

	Once the input vector $\b{p}$ is received, $r_k$ is computed  by binary search for each coordinate $k=1,\dots,d$. Then, the solution for the values $r_1,\dots,r_d$ just obtained is looked up in the table. The time for lookup is dominated by the $\bigoh(d\log |\Bcal|)$ time of the binary searches, yielding a total running time of $\bigoh(d\log |\Bcal|+|\delta_\Bcal(\b{p})|)$ to compute and return the output.
\end{proof}

\thmexphmm*
\begin{proof}
	We apply consecutively Theorem~\ref{thm:expdominance}, Lemma~\ref{lem:dominance to product}, Lemma~\ref{lem:splice} and Lemma~\ref{lem:product to viterbi}.
	First, by Theorem~\ref{thm:expdominance}, we solve the online dominance reporting problem in $\bigoh(d\log |\Bcal|)$ time.
	Second, we apply Lemma~\ref{lem:dominance to product} to this algorithm, and obtain a solution for the online matrix--vector $(\max,+)$-multiplication for a rectangular $m\times t$ matrix running in $\bigoh(m+t^2 \log |m|)$ time.
	Third, by Lemma~\ref{lem:splice}, this algorithm can be used to solve the multiplication for $n\times n$ matrices in $\bigoh(n/t \cdot (n+t^2\log n))$ time for any choice of $t$.
	Choosing $t = \sqrt{n/\log n}$, the running time is $\bigoh(n\sqrt{n \log n})=\bigoh(n^{3/2} \log^{1/2} n)$ with a preprocessing cost of $\bigoh(n^{\sqrt{n/\log n} + 1})$ time and space.
	Finally, by Lemma~\ref{lem:product to viterbi}, we employ this algorithm to solve the \textsc{mapd} problem in $\bigoh(mn^{3/2}\log^{1/2} n)$ time, with $\bigoh(n^{1+\sqrt{n/\log n}})$ time and space required for the preprocessing.
\end{proof}

\newpage
\section{Experimental evaluation}
	\begin{figure}[h!]
		\includegraphics[width=1\linewidth]{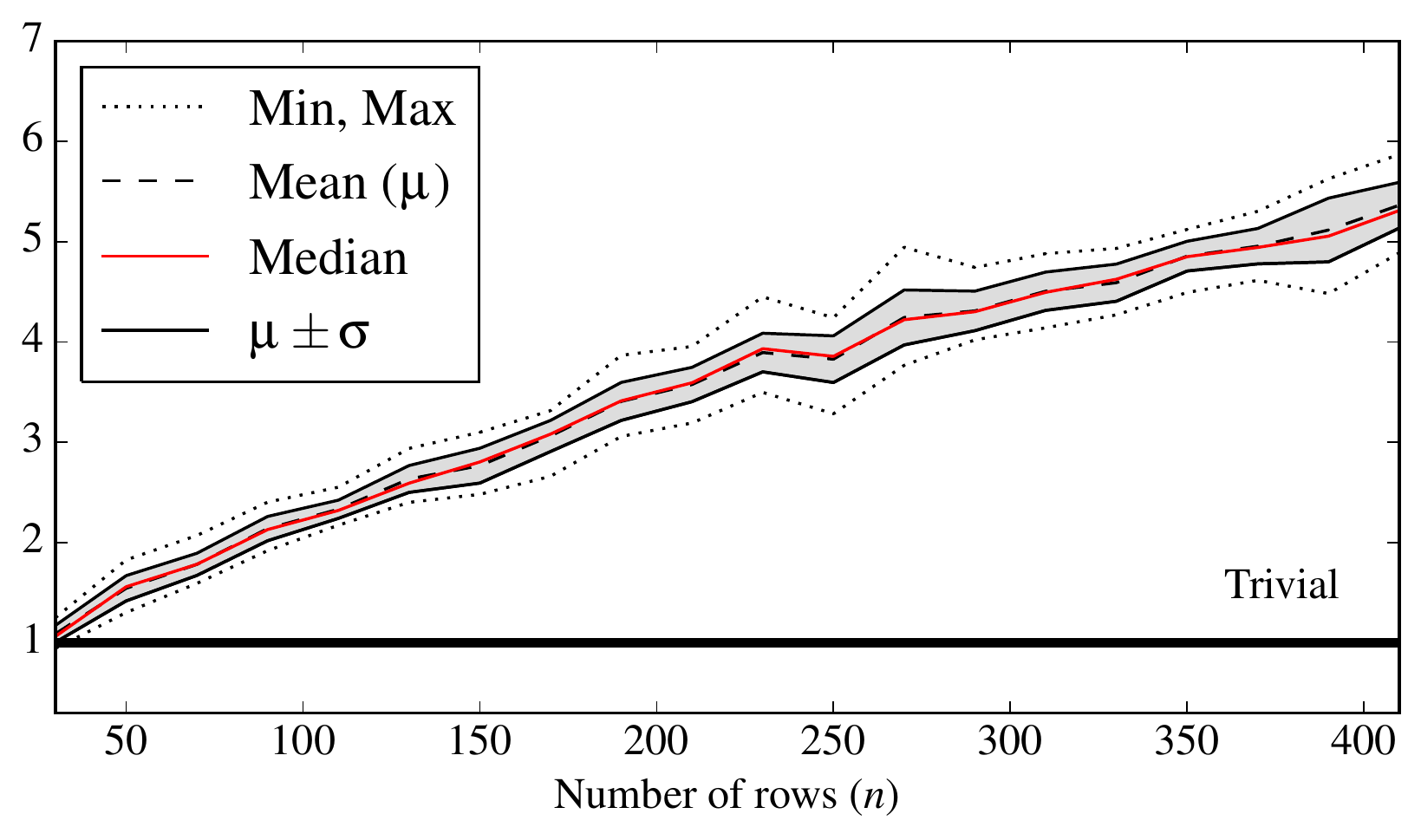}\\[-.8cm]
		\caption{Relative throughput of Algorithm~\ref{algo:tablet algo simplified} when compared to the trivial quadratic algorithm on matrices of size $n\times 2$. A higher throughput implies faster computation.}
	\end{figure}
	\vspace{-.3cm}
	\begin{figure}[h!]
		\includegraphics[width=1\linewidth]{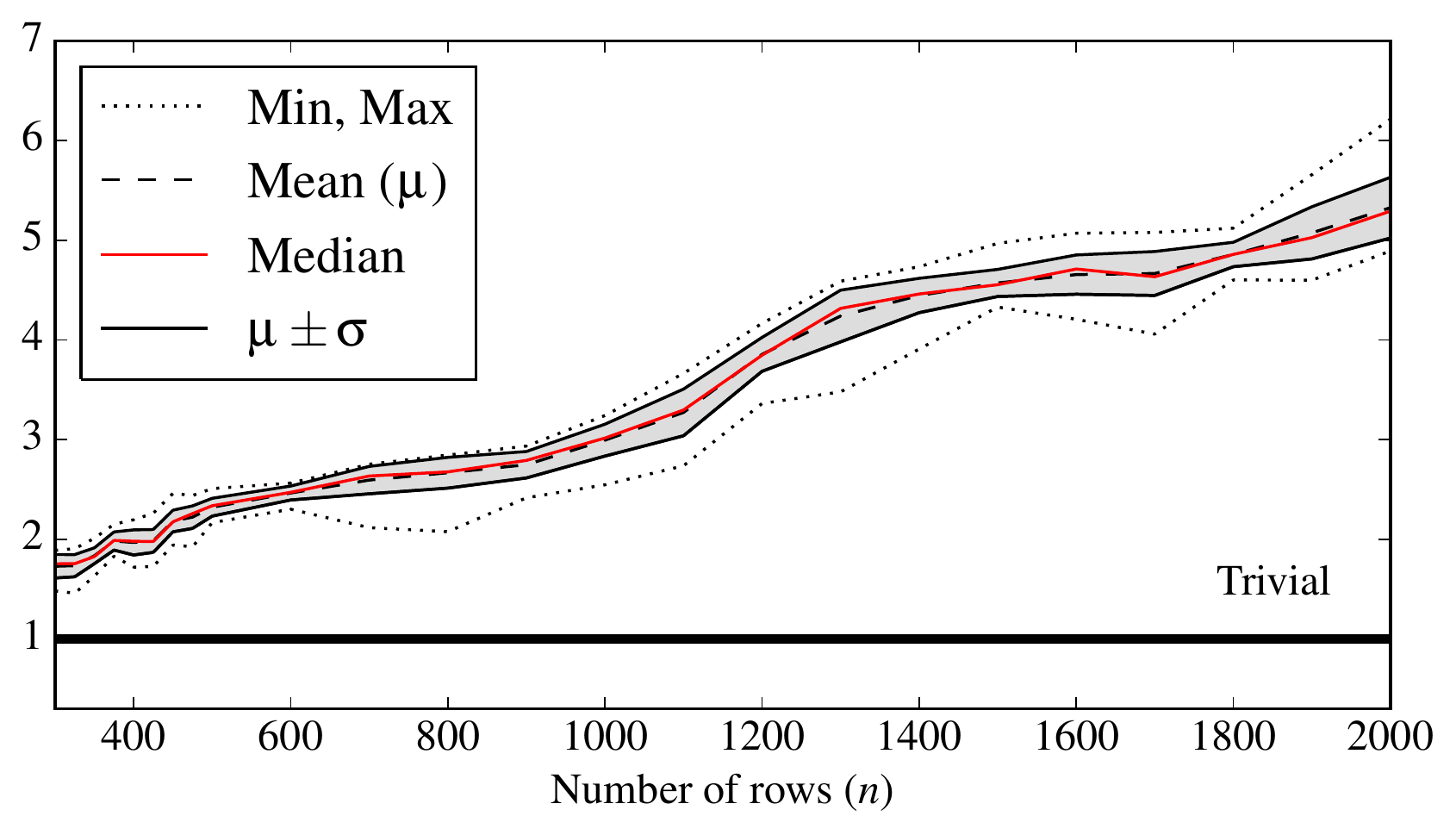}\\[-.8cm]
		\caption{Relative throughput of Algorithm~\ref{algo:tablet algo simplified} when compared to the trivial quadratic algorithm on matrices of size $n\times 3$. A higher throughput implies faster computation.}
	\end{figure}
	\vspace{-.3cm}
	\begin{figure}[h!]
		\includegraphics[width=1\linewidth]{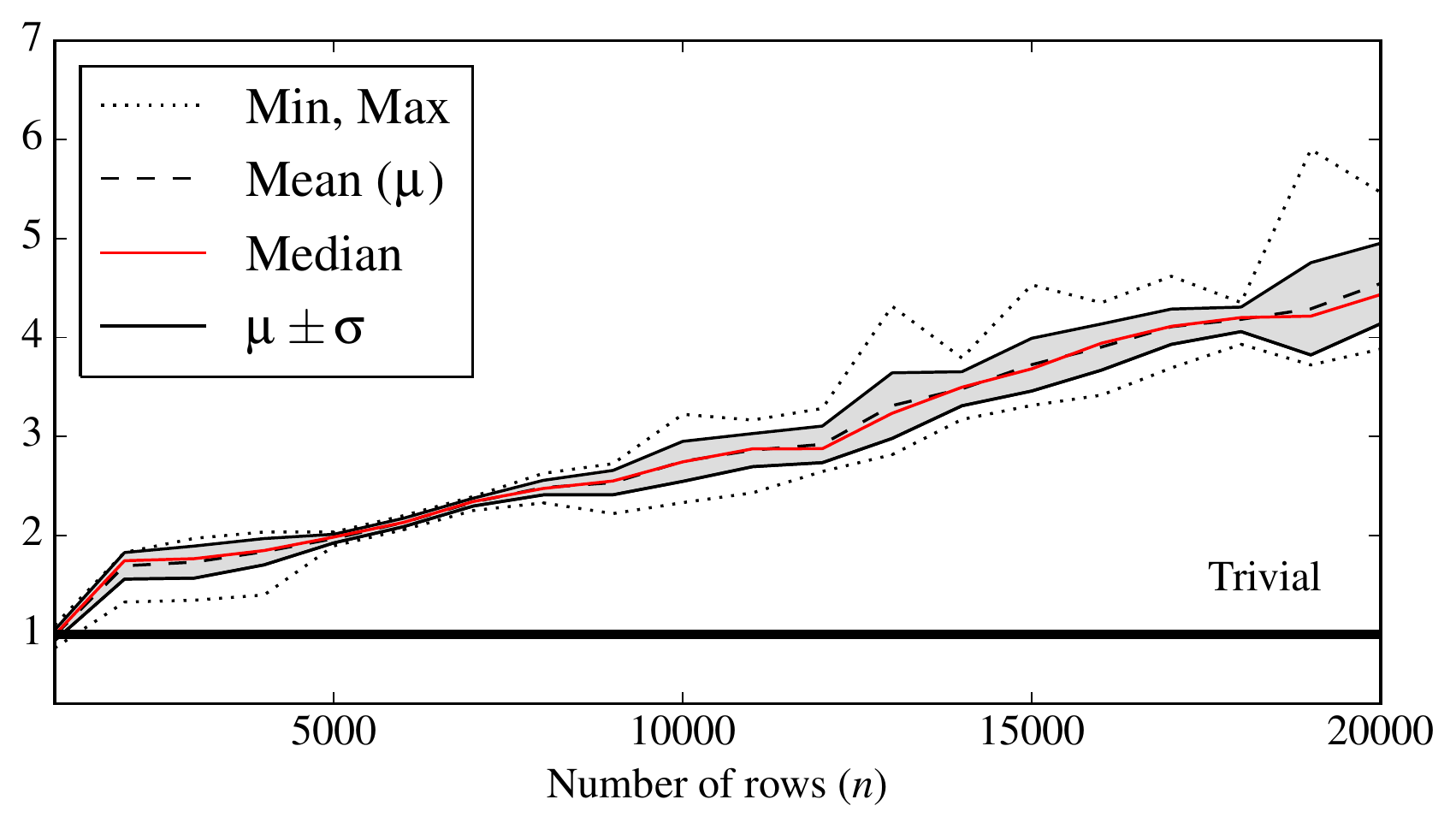}\\[-.8cm]
		\caption{Relative throughput of Algorithm~\ref{algo:tablet algo simplified} when compared to the trivial quadratic algorithm on matrices of size $n\times 4$. A higher throughput implies faster computation.}
	\end{figure}
	\clearpage\newpage

\fi
\end{document}